\documentclass[journal]{IEEEtran}
\usepackage{algorithm}
\usepackage{multirow}
\usepackage[table ]{xcolor}
\usepackage{array}
\usepackage{algpseudocode}
\usepackage{amsmath}
\usepackage{cite}
\usepackage{booktabs}
\usepackage{graphicx}
\usepackage{float}
\usepackage{threeparttable}
\usepackage{epstopdf}
\usepackage{xfrac}
\usepackage{makecell}
\usepackage{amssymb}
\usepackage{subfig}
\usepackage{pifont}
\usepackage{amssymb}
\usepackage{longtable}
\usepackage{subcaption}
\usepackage{amsthm}
\usepackage{bm}   
\usepackage{pdfpages}

\newtheorem{definition}{Definition}

\newtheorem{axiom}{Axiom}
\newtheorem{theorem}{Theorem}

\newtheorem{lemma}{Lemma}
\newtheorem{remark}{Remark}

\newcolumntype{L}[1]{>{\raggedright\arraybackslash}p{#1}}

\begin{document}

\title{Fairness-Aware Performance Evaluation for Multi-Party Multi-Objective Optimization}

\author{Zifan~Zhao, Peilan~Xu,~\IEEEmembership{Member,~IEEE}, Wenjian~Luo,~\IEEEmembership{Senior Member,~IEEE}
 % <-this % stops a space
	\thanks{This work is partly supported by the Natural Science Foundation of Jiangsu Province (Grant No. BK20230419), the National Natural Science Foundation of China (Grant No. U23B2058, 62576121), Shenzhen Fundamental Research Program (Grant No. JCYJ20220818102414030), Shenzhen Science and Technology Program (Grant No. ZDSYS20210623091809029). \textit{(Corresponding author: Peilan Xu.)} }
	\thanks{Zifan Zhao and Peilan Xu are with School of Artificial Intelligence, Nanjing University of Information Science and Technology, Nanjing 210044, China.

    Wenjian Luo is with Guangdong Provincial Key Laboratory of Novel Security Intelligence Technologies, Institute of Cyberspace Security, School of Computer Science and Technology, Harbin Institute of Technology, Shenzhen 518055, China.}
	
	\thanks{Email: 202383920026@nuist.edu.cn, xpl@nuist.edu.cn, luowenjian@hit.edu.cn.}
}

\maketitle

\begin{abstract}
	In multi-party multi-objective optimization problems (MPMOPs), the evaluation of solution sets is commonly conducted using classical performance metrics, such as IGD and HV, aggregated across decision makers (DMs). However, such mean-based evaluations may yield unfair evaluation by implicitly favoring certain parties, as they assume that identical geometric approximation quality with respect to each party’s Pareto front (PF) carries comparable evaluative significance. Moreover, prevailing notions of MPMOP optimal solutions are largely restricted to strictly common Pareto-optimal solutions, representing a narrow and sometimes impractical form of cooperation in multi-party decision-making scenarios. These limitations obscure whether a solution set reflects balanced relative gains or meaningful consensus among heterogeneous DMs.
	To address these issues, this paper develops a fairness-aware performance evaluation framework grounded in a generalized notion of consensus solutions. From a cooperative game-theoretic perspective, we formalize four axioms that a fairness-aware evaluation function for MPMOPs should satisfy. By introducing a concession rate vector to quantify acceptable compromises by individual DMs, we generalize the classical definition of MPMOP optimal solutions and embed classical performance metrics into a Nash-product-based evaluation framework, which is theoretically shown to satisfy all four axioms. To support empirical validation, we further construct benchmark problems that extend existing MPMOP suites by incorporating consensus-deficient negotiation structures.	Experimental results demonstrate that the proposed evaluation framework is able to distinguish algorithmic performance in a manner consistent with consensus-aware fairness considerations. Specifically, algorithms converging toward strictly common solutions are assigned higher evaluation scores when such solutions exist, whereas in the absence of strictly common solutions, algorithms that effectively cover the commonly acceptable region are more favorably evaluated.

\begin{IEEEkeywords}
Multi-party Multi-objective Optimization, Fairness-aware Evaluation, Nash Bargaining, Performance metrics.
\end{IEEEkeywords}
\end{abstract}

\IEEEpeerreviewmaketitle

\section{Introduction}
\label{sec: introduction}
%MPMOP和多目标博弈的背景
Multi-party multi-objective optimization problems (MPMOPs) arise in decision-making scenarios where multiple decision makers (DMs) simultaneously pursue their own sets of objectives, which may conflict both within and across parties~\cite{liu2020evolutionary}. Typical applications include UAV path planning, where regulatory authorities emphasize safety and compliance, while commercial operators prioritize cost efficiency and operational flexibility~\cite{chen2025novel}. In such scenarios, no centralized authority exists to aggregate heterogeneous objectives into a single decision criterion, and feasible outcomes must emerge from compromises among self-interested and heterogeneous DMs. 

% 多方多目标进化社区
Within evolutionary computation, population-based metaheuristics have demonstrated strong effectiveness in approximating PFs, owing to their ability to maintain diversity and to explore trade-offs simultaneously~\cite{HuangLi2024_Survey, xue2025multiple, Wang2024_Surrogate}. Extensions to multi-party scenarios exploit this population-based nature to incorporate the perspectives of multiple DMs. Representative approaches include multi-party non-dominated sorting~\cite{she2021new,she2023multiparty}, which extends NSGA-II~\cite{deb2002fast} by intersecting party-specific Pareto layers to identify mutually acceptable solutions, multi-objective immune algorithms~\cite{chen2024evolutionary,lin2015hybrid} incorporating multi-party dominance relations, and party-by-party evolutionary update strategies~\cite{chang2022multiparty,chang2023biparty} based on MOEA/D~\cite{zhang2007moea, harada2017adaptive}. Indicator-based algorithms further integrate classical indicator-driven selection with multi-party dominance relations to guide the search toward compromise-oriented solutions~\cite{song2024indicator}. In addition, optimization methods tailored for discrete multi-party problems~\cite{zhang2025multi}, theoretical analyses of runtime and convergence behavior~\cite{sun2025runtime}, privacy-preserving mechanisms~\cite{she2023privacy}, and benchmark problems specifically designed for MPMOPs~\cite{song2022multiobjective,luo2024benchmark} have also been actively investigated.

%现有指标的问题
%公平性问题
Despite these advances, the evaluation of solution sets for MPMOPs still largely relies on classical performance metrics such as IGD~\cite{zhang2008multiobjective} and HV~\cite{zhang2007moea}, typically aggregated across parties to form metrics such as meanIGD and meanHV. Such aggregation implicitly assumes that identical geometric approximation quality with respect to each party’s PF carries comparable evaluative significance across all DMs~\cite{jain1984quantitative,ogryczak2014fair,yu2025towards}. However, in multi-party decision-making contexts, this assumption may not align with the notion of consensus. That is, a solution set may achieve excellent approximation for one party while providing only marginal benefits to others, yet still obtain a favorable averaged score. Consequently, mean-based metrics may fail to distinguish between outcomes exhibiting balanced relative gains and those biased toward a subset of parties.

%共识性问题
This limitation is further amplified by the solution structure of MPMOPs. Under strict Pareto rationality, solutions acceptable to all parties must lie in the intersection of their Pareto sets (PSs), which is often sparse or even empty. Classical metrics such as IGD and HV evaluate geometric approximation quality over the entire PF and may penalize solution sets concentrated in small common regions, even when such regions correspond to the only mutually acceptable outcomes. As a result, geometric balance or uniform coverage of the objective space, implicitly rewarded by these metrics, does not necessarily reflect consensus-oriented fairness in MPMOPs.

%全篇思路
Given that cooperation in MPMOPs cannot be adequately characterized by strictly common Pareto-optimal solutions alone, a more general notion of consensus is required. In this work, consensus refers to solution sets that yield mutually acceptable relative gains across DMs, rather than strictly dominating outcomes under classical Pareto rationality. From a cooperative bargaining perspective, each DM is associated with a bargaining region representing the set of solutions it considers acceptable. Rather than treating the existence of strictly common solutions as a prerequisite, we allow parties to tolerate controlled concessions, thereby inducing a unified notion of consensus solutions that subsumes strictly common Pareto-optimal solutions as a special case. Such compromises are quantified through a concession rate vector, which characterizes the acceptable degree of deviation for each DM.

%贡献
% Contributions
Building on this, we integrate classical performance metrics into a Nash-product-based evaluation framework so that solution sets yielding balanced relative gains across parties are consistently favored. To avoid superficially balanced yet non-consensual outcomes, we further introduce a penalty mechanism for solutions lying outside the jointly acceptable consensus region. The proposed framework is formulated as a performance evaluation methodology and is independent of specific optimization algorithms. The main contributions of this paper are summarized as follows:

\begin{itemize}
	\item \textit{Fairness limitations of existing performance metrics and evaluation axioms.}
	We identify fundamental limitations of existing mean-based performance metrics for MPMOPs, showing that they may yield biased or incomplete evaluation by failing to capture consensus and balanced relative gains among DMs. From a cooperative game-theoretic perspective, we formalize four axioms that a fairness-aware performance evaluation function for MPMOPs should satisfy: (A1) Pareto Monotonicity, (A2) Symmetry, (A3) Balance Preference, and (A4) Acceptability Monotonicity.
	
	\item \textit{Consensus optimal solutions and Nash-product-based evaluation framework.}
	By introducing a concession rate vector to quantify acceptable compromises for each DM, we generalize the classical notion of MPMOP optimal solutions and define a unified concept of consensus solutions. Building on this formulation, we propose a Nash-product-based evaluation framework that integrates classical performance metrics and theoretically prove that it satisfies all four axioms. The proposed framework provides a principled mechanism for jointly evaluating efficiency and fairness across heterogeneous DMs.
	
	\item 
	\textit{Benchmark extensions for consensus-deficient negotiation structures.}
	To enable more rigorous and representative testing of multi-party multi-objective optimization algorithms, we extend existing MPMOP benchmark suites by constructing problem instances with highly conflicting negotiation structures among DMs. These benchmarks exhibit empty intersections of party-specific PSs and therefore pose significant challenges for algorithmic convergence toward cooperative outcomes. They also provide an experimental basis for examining how different performance evaluation criteria evaluate algorithmic behavior under consensus-oriented settings.
	
\end{itemize}

%Organization structure
The rest of this paper is organized as follows. Section \ref{sec:preliminaries} introduces the preliminaries. 
Section \ref{sec:limitations} analyzes the limitations of prevailing performance metrics for MPMOPs. Section \ref{sec:proposed} presents the proposed fairness-aware performance evaluation framework based on the Nash product. Section \ref{sec:experiments} reports experimental results comparing the proposed metric with meanIGD across multiple representative algorithms on two MPMOP benchmarks. Section \ref{sec:discussion} discusses the validity of the proposed evaluation function and its potential extension to problems without a known PF. Finally, Section \ref{sec: Conclusion} concludes the paper and outlines directions for future work.

\section{Preliminaries}
\label{sec:preliminaries}

This section introduces the basic concepts required for the subsequent analysis, including MPMOPs, Pareto dominance relations in multi-party settings, the Nash bargaining framework, and commonly used performance metrics for evaluating multi-party multi-objective evolutionary algorithms.

\subsection{Multi-party Multi-objective Optimization Problem}

MPMOPs extend classical multi-objective optimization problems by considering multiple DMs, each associated with an individual set of potentially conflicting objectives. A minimization MPMOP can be formally defined as follows.

% Definition of MPMOP
\begin{definition}[MPMOP~\cite{liu2020evolutionary}]
	\label{def:MPMOP}
	An MPMOP involves $M$ DMs. For each DM $m \in \{1,\dots,M\}$, let
	\[
	F_m(\mathbf{x}) = \big(f_{m,1}(\mathbf{x}),\dots,f_{m,K_m}(\mathbf{x})\big)
	\]
	denote its $K_m$-dimensional objective vector. The overall problem is defined as
	\[
	\min_{\mathbf{x} \in \mathcal{X}}
	F(\mathbf{x}) = \big(F_1(\mathbf{x}),\dots,F_M(\mathbf{x})\big),
	\]
	where $\mathbf{x} \in \mathcal{X} \subseteq \mathbb{R}^n$ is the decision vector and $\mathcal{X}$ denotes the feasible decision space.
\end{definition}

To evaluate solution quality from the perspective of each individual DM, Pareto dominance is recalled for party-wise objective vectors.

% Definition of pareto dominance
\begin{definition}[Pareto Dominance]
	\label{def:pareto_dominance}
	For a given party $m$, let $F_m(\mathbf{x}) = (f_{m,1}(\mathbf{x}),\dots,f_{m,K_m}(\mathbf{x}))$. For two solutions $\mathbf{x}, \mathbf{y} \in \mathcal{X}$:
	\begin{enumerate}
		\item $\mathbf{x}$ weakly dominates $\mathbf{y}$ with respect to party $m$, denoted $\mathbf{x} \preceq_m \mathbf{y}$, if $f_{m,k}(\mathbf{x}) \le f_{m,k}(\mathbf{y})$ for all $k \in \{1,\dots,K_m\}$;
		\item $\mathbf{x}$ dominates $\mathbf{y}$ with respect to party $m$, denoted $\mathbf{x} \prec_m \mathbf{y}$, if $\mathbf{x} \preceq_m \mathbf{y}$ and there exists at least one $k \in \{1,\dots,K_m\}$ such that $f_{m,k}(\mathbf{x}) < f_{m,k}(\mathbf{y})$.
	\end{enumerate}
\end{definition}

Based on party-wise dominance relations, a joint dominance notion across all DMs can be defined.

\begin{definition}[Multi-party Pareto Dominance]
	\label{def:multi_party_pareto_dominance}
	For two solutions $\mathbf{x}, \mathbf{y} \in \mathcal{X}$ in an MPMOP:
	\begin{enumerate}
		\item $\mathbf{x}$ weakly dominates $\mathbf{y}$ in the multi-party sense, denoted $\mathbf{x} \preceq^{\mathrm{MP}} \mathbf{y}$, if $\mathbf{x} \preceq_m \mathbf{y}$ holds for all $m \in \{1,\dots,M\}$;
		\item $\mathbf{x}$ dominates $\mathbf{y}$ in the multi-party sense, denoted $\mathbf{x} \prec^{\mathrm{MP}} \mathbf{y}$, if $\mathbf{x} \preceq_m \mathbf{y}$ holds for all $m \in \{1,\dots,M\}$ and there exists at least one party $m$ such that $\mathbf{x} \prec_m \mathbf{y}$.
	\end{enumerate}
\end{definition}

Accordingly, multi-party Pareto-optimal solutions correspond to the intersection of party-wise Pareto-optimal sets.

\subsection{Nash Bargaining Framework}

The Nash bargaining problem~\cite{nash1950bargaining} is a canonical cooperative game-theoretic model for analyzing negotiated outcomes among multiple participants. Let $\mathcal{S} \subseteq \mathbb{R}^M$ denote the feasible utility set, where each vector $\mathbf{u} = (u_1,\ldots,u_M)$ represents the utilities simultaneously attainable by all $M$ parties through cooperation. Let $\mathbf{d} = (d_1,\ldots,d_M)$ denote the disagreement (or fallback) utility vector, specifying the utility received by each party if no agreement is reached. The Nash bargaining problem is thus defined as the pair $(\mathcal{S}, \mathbf{d})$, subject to the feasibility condition
\[
\mathcal{S} \cap \{\mathbf{u} : \mathbf{u} \ge \mathbf{d}\} \neq \emptyset.
\]
To ensure the existence and uniqueness of the Nash bargaining solution, $\mathcal{S}$ is typically assumed to be closed and convex. In this work, the Nash bargaining framework is used as an analytical tool for performance evaluation, without serving as a solution concept or optimization mechanism for MPMOPs.

Within multi-objective game environments, cooperative and bargaining-based solution concepts have been extensively studied to characterize rational and fair outcomes~\cite{moulin1984implementing,cha2022ecs}. Existing approaches can be broadly categorized as follows:  
(a) extensions of Nash equilibrium to vector-valued payoffs, which replace scalar utility comparisons with Pareto dominance or aggregated utility mechanisms~\cite{ropke2022nash,li2024open,ruadulescu2020utility,ruadulescu2022opponent};  
(b) multi-objective bargaining solutions, which employ constructs such as the Nash product to formalize equitable compromise in vector payoff spaces~\cite{ruan2023multiobjective,li2022coordination,wang2023nash,meng2024nash}; and  
(c) preference-aware optimization frameworks, which explicitly incorporate heterogeneous preferences or individual utility regions into the decision-making process~\cite{ip2025user,huang2024direct,li2023interactive}.

While these studies establish important theoretical foundations for fairness and cooperation, fairness is predominantly embedded in solution construction or equilibrium definitions. In contrast, existing performance metrics for MPMOPs are typically inherited from single DM multi-objective optimization and lack algorithm-independent mechanisms for evaluating balanced cooperation and consensus quality across heterogeneous DMs.

\subsection{Performance metrics}
\label{subsec:performance_metrics}

Performance metrics play a central role in evaluating multi-objective optimization algorithms by quantifying the quality of obtained solution sets. Among the most widely adopted metrics are the IGD and HV, which are commonly used to evaluate convergence and diversity in classical multi-objective optimization. In MPMOPs, however, each DM is associated with an individual PF, which complicates direct application of classical metrics. As a result, existing studies have proposed multi-party extensions of IGD and HV by aggregating party-wise performance evaluation.

\subsubsection{Inverted Generational Distance (IGD)}

The IGD evaluates how well an approximate solution set $P$ represents a reference $\mathrm{PF}$ by measuring the average distance from each point in $\mathrm{PF}$ to its nearest point in $P$:
\begin{equation}
	\label{def:IGD}
	\mathrm{IGD}(\mathrm{PF},P) = \frac{1}{|\mathrm{PF}|}\sum_{\mathbf{v} \in \mathrm{PF}} d(\mathbf{v},P),
\end{equation}
where $d(\mathbf{v},P)$ denotes the minimum Euclidean distance between $\mathbf{v}$ and any point in $P$. Smaller IGD values indicate closer approximation of the reference PF.

For MPMOPs, since each DM $m$ is associated with an individual $\mathrm{PF}_m$, Liu et al.~\cite{liu2020evolutionary} proposed a multi-party extension of IGD that aggregates distances across all parties. Specifically, the distance between a reference point $\mathbf{v}$ and a candidate solution set $P$ is defined as
\begin{equation}
		d(\mathbf{v}, P) = \min_{\mathbf{p} \in P} \sum_{m=1}^{M}
		\sqrt{\sum_{k=1}^{K_m} (v_{m,k} - p_{m,k})^2},
\end{equation}
where $K_m$ denotes the number of objectives of party $m$, and $v_{m,k}$ and $p_{m,k}$ represent the $k$-th objective value of party $m$ for $\mathbf{v}$ and $\mathbf{p}$, respectively. The remaining computation follows the classical IGD formulation. This multi-party IGD measures the overall geometric proximity of $P$ to all party-wise PFs.

\subsubsection{Hypervolume (HV)}

The HV metric measures the volume of the objective space dominated by a solution set $P$ with respect to a predefined reference point $\mathbf{r}$. For a minimization problem associated with party $m$, each solution $\mathbf{p} \in P$ is represented in the corresponding objective space as
\[
\mathbf{p} = (p_{m,1},\dots,p_{m,K_m}).
\]
The HV of $P$ with respect to party $m$ is defined as
\begin{equation}
	\label{def:HV}
	\mathrm{HV}_m(P)
	= \lambda\!\left(
	\bigcup_{\mathbf{p} \in P}
	[p_{m,1}, r_{m,1}] \times \cdots \times [p_{m,K_m}, r_{m,K_m}]
	\right),
\end{equation}
where $\lambda(\cdot)$ denotes the Lebesgue measure.

In MPMOPs, a unified joint hypervolume across all parties is generally unavailable due to heterogeneous objective spaces. A common practice is therefore to evaluate the hypervolume separately for each party and then aggregate the results. For example, Chen et al.~\cite{chen2024evolutionary} proposed computing the mean hypervolume as
\begin{equation}
	\mathrm{meanHV}(P) = \frac{1}{M}\sum_{m=1}^M \mathrm{HV}_m(P),
\end{equation}
which treats the party-wise HV values symmetrically.

\section{Fairness Limitations of Classical Metrics}
\label{sec:limitations}
In MPMOPs, classical performance metrics such as IGD and HV are typically evaluated with respect to each DM’s PF and then aggregated across parties using arithmetic means, yielding metrics such as $\mathrm{meanIGD}$ and $\mathrm{meanHV}$. While computationally convenient, this practice implicitly treats the problem as if it involved a single representative DM, thereby overlooking fairness considerations among heterogeneous DMs.

As a consequence, the aggregated metric value may improve even when one or more DMs experience substantial utility degradation. Such evaluations are inconsistent with classical cooperative bargaining principles, in which balanced concessions and symmetry among participants are central. From a negotiation perspective, each DM evaluates the same solution set through a distinct utility mapping~\cite{liu2020evolutionary,taherdoost2023multi,zhang2021personalized}. Simple averaging of per-party IGD or HV values can therefore obscure asymmetric sacrifices, assigning similar scores to fundamentally different cooperative outcomes.

To make this limitation explicit, we conduct visualization-based comparative analyses on multi-party distance minimization problems (MPDMPs)~\cite{she2023multiparty}. These problems exhibit a transparent geometric structure that allows simultaneous inspection of solution behavior in both decision and objective spaces, making them particularly suitable for illustrating how solution sets relate to different PFs and whether their trade-offs align with intuitive notions of cooperative fairness. MPDMPs model scenarios in which multiple DMs share a common decision space but evaluate solutions using distinct distance-based objectives reflecting heterogeneous preferences. We consider two DMs, denoted as DM~A and DM~B, sharing a common decision variable $\mathbf{x} \in \mathbb{R}^2$. Each DM seeks to minimize Euclidean distances from $\mathbf{x}$ to two distinct target points. Two representative cases are constructed.

\textit{Case~1 (Disjoint Pareto-optimal regions):}  
DM~A minimizes distances to $(1,1)$ and $(3,3)$, while DM~B minimizes distances to $(3,1)$ and $(5,3)$. The resulting PSs in the decision space are parallel and disjoint, implying that no solution simultaneously lies on both DMs’ PSs.

\textit{Case~2 (Intersecting Pareto-optimal regions):}  
DM~A minimizes distances to $(1,1)$ and $(5,3)$, while DM~B minimizes distances to $(2,3)$ and $(4,1)$. In this configuration, the two PSs intersect at $\mathbf{x}^*=(3,2)$, yielding a common Pareto-optimal region that naturally represents a balanced compromise.

Fig.~\ref{fig:Nash Dilemma} visualizes both cases in the decision and objective spaces, together with two representative solution sets, denoted as $P_1$ and $P_2$.

\begin{figure}[ht]
	\centering
	\small
	
	\subfloat[Decision spaces.
	Left: the PSs of the two DMs are parallel and disjoint, indicating the absence of a common Pareto-optimal region. In this case, solution set $P_2$ is strongly biased toward DM~A, whereas $P_1$ exhibits a more symmetric compromise.
	Right: the PSs of the two DMs intersect, forming a common Pareto-optimal region. Here, $P_1$ concentrates around this intersection, while $P_2$ is biased toward DM~B.]
	{\includegraphics[width=1\columnwidth]{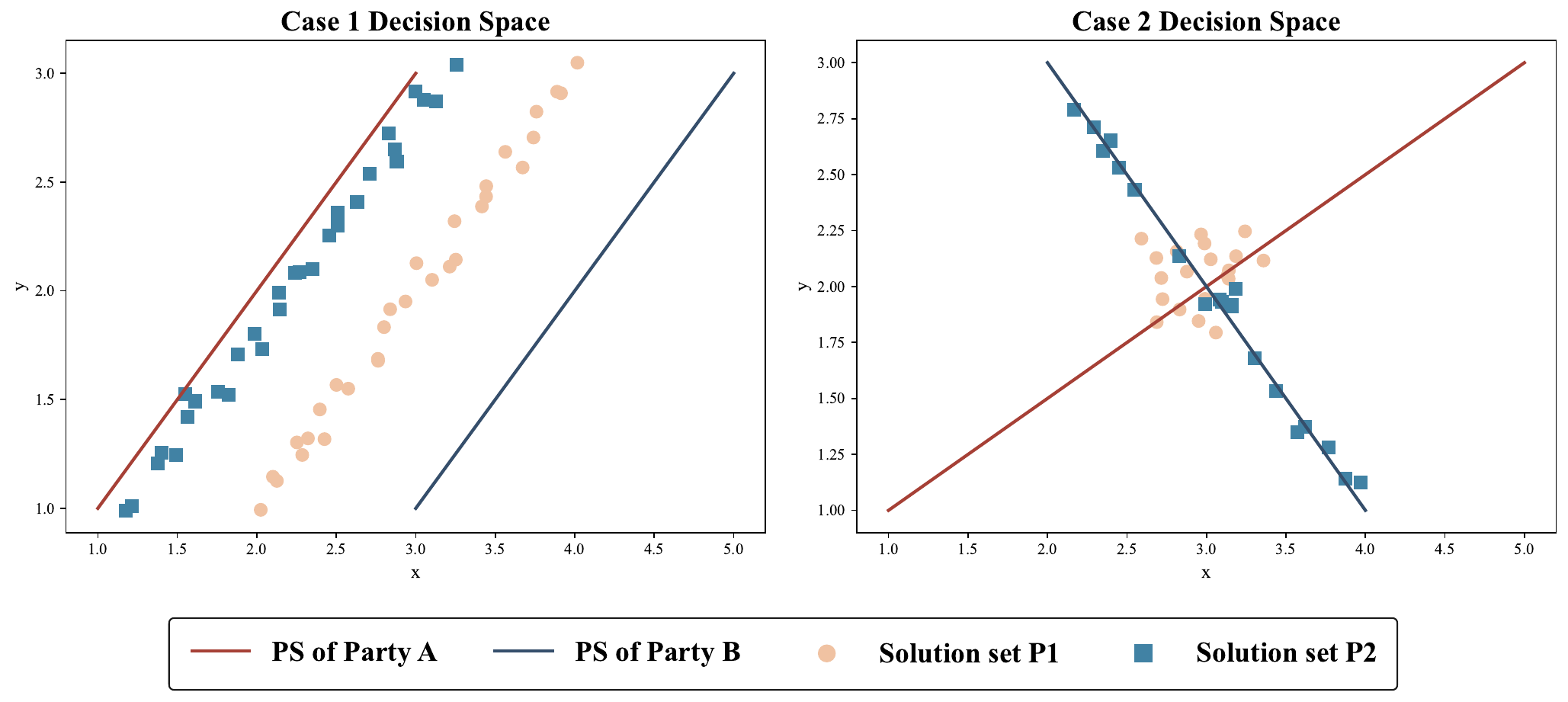}}
	
	\subfloat[Objective spaces.
	The first row corresponds to Case~1, and the second row corresponds to Case~2.
	In Case~1, $P_2$ closely approximates the PF of DM~A but remains far from that of DM~B, whereas $P_1$ maintains a more balanced approximation for both DMs.
	In Case~2, $P_2$ provides relatively uniform coverage of DM~B’s PF, while $P_1$ concentrates near the intersection of the two PFs, reflecting a more consensus-oriented outcome.]
	{\includegraphics[width=1\columnwidth]{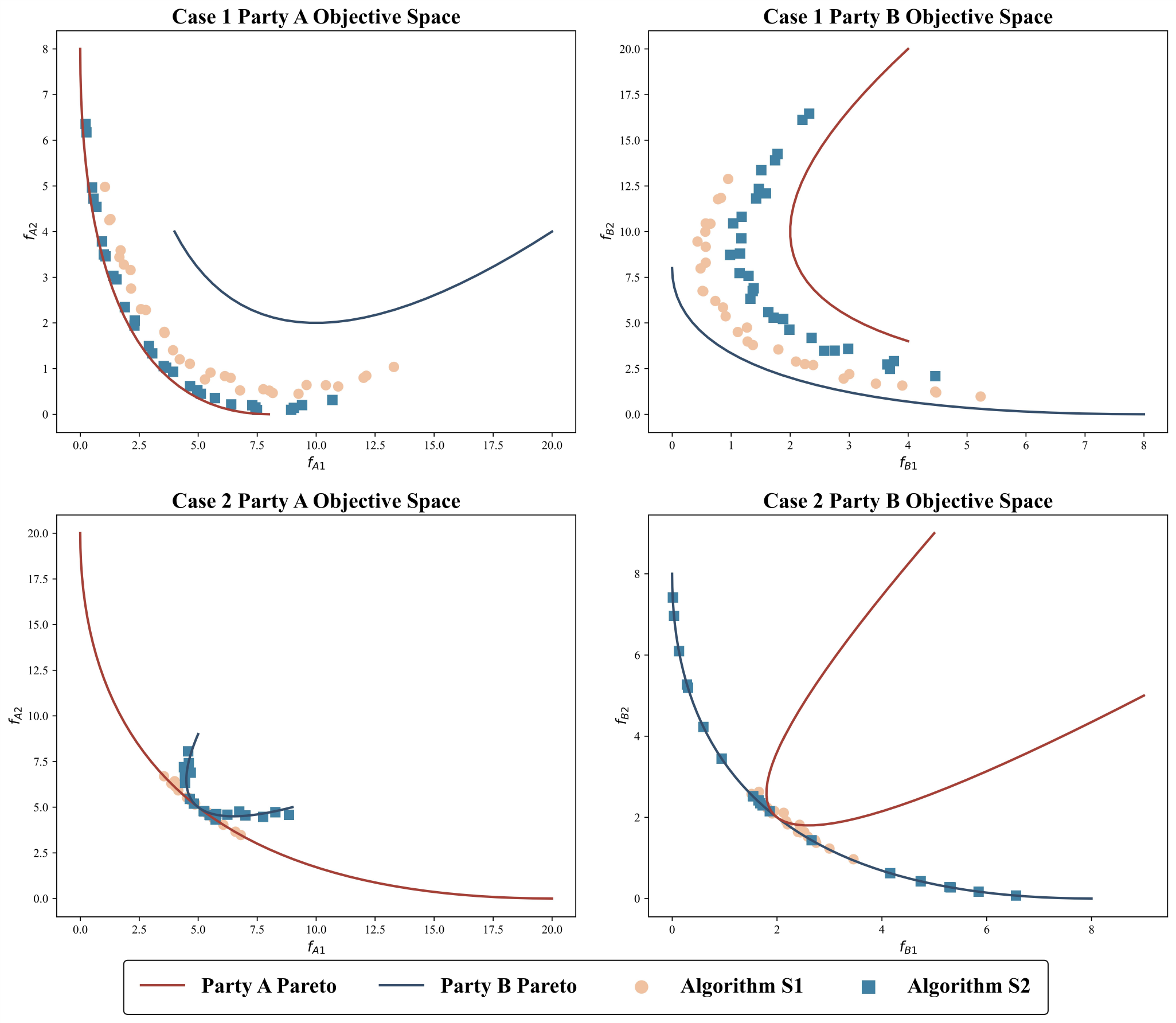}}
	
	\caption{Visualization of fairness limitations of mean-based performance metrics in representative MPMOP cases. Decision and objective spaces for two representative MPDMP cases are shown, illustrating scenarios with and without a common Pareto-optimal region.}
	\label{fig:Nash Dilemma}
\end{figure}

\begin{table}[htbp]
	\centering
	\caption{IGD and HV comparison for representative solution sets in Case 1 and Case 2.}
	\label{tab:case1_case2_comparison}
	\small
	\setlength{\tabcolsep}{4pt}
	\begin{tabular}{lcccccc}
		\toprule
		& $\mathrm{IGD_A}$ & $\mathrm{IGD_B}$ & $\mathrm{meanIGD}$ & $\mathrm{HV_A}$ & $\mathrm{HV_B}$ & $\mathrm{meanHV}$ \\
		\midrule
		
		\multicolumn{7}{c}{\textbf{Case 1}} \\
		$P_1$ & 0.73 & 0.73 & 0.73 $\approx$ & 3.90 & 3.90 & 3.90 $\approx$ \\
		$P_2$ & 0.14 & 1.36 & 0.75 $\approx$ & 5.42 & 2.06 & 3.74 $\approx$ \\
		
		\midrule
		\multicolumn{7}{c}{\textbf{Case 2}} \\
		$P_1$ & 5.18 & 1.78 & 3.48 $\downarrow$ & 9.65 & 5.19 & 7.42 $\approx$ \\
		$P_2$ & 5.42 & 0.40 & 2.91 $\uparrow$ & 8.19 & 6.68 & 7.43 $\approx$ \\
		
		\bottomrule
	\end{tabular}
\end{table}

Table~\ref{tab:case1_case2_comparison} reports the IGD and HV values for both cases. In Case~1, solution set $P_2$ strongly favors DM~A, achieving a very small $\mathrm{IGD_A}$ at the expense of a large $\mathrm{IGD_B}$. However, meanIGD assigns nearly identical scores to $P_1$ and $P_2$, effectively masking this imbalance. A similar effect is observed for HV, where aggregation masks the underlying imbalance. From a bargaining perspective, $P_1$ corresponds to a more symmetric compromise, whereas $P_2$ does not satisfy the symmetry principle central to cooperative bargaining.

In Case~2, where a common Pareto-optimal region exists, $P_1$ concentrates near the intersection of the two PFs and represents a balanced cooperative outcome. Nevertheless, meanIGD slightly favors $P_2$, which is clearly biased toward DM~B, due to its uneven distance profile. meanHV again fails to distinguish the two solution sets. This behavior is inconsistent with the bargaining intuition that solution sets close to the common region should be preferred as fair and effective cooperative outcomes.

These examples demonstrate that mean-based performance metrics not only mask severe imbalance when no common Pareto-optimal region exists, but also fail to appropriately favor solution sets that concentrate near the common Pareto-optimal region when such solutions are available.

\section{A Fairness-Aware Nash-Product-Based Evaluation Framework for MPMOPs}
\label{sec:proposed}
%对整节内容的简单介绍
This section develops a fairness-aware evaluation method for MPMOPs. Motivated by the limitations of mean-based performance aggregation, we propose an evaluation function that explicitly accounts for balanced compromise, impartiality across DMs, and sensitivity to mutual acceptability. Our construction proceeds in three steps. First, we formalize a set of axioms that capture rationality, symmetry, equity, and acceptability requirements for evaluating solution sets in MPMOPs. Second, we show that these axioms are satisfied by a \emph{Nash-product-based evaluation function}, which aggregates per-DM utilities multiplicatively. Third, we extend this evaluation function to explicitly penalize violations of inter-party acceptability through concession-based penalty mechanism that augments individual loss terms. These steps yield a unified and axiomatically grounded evaluation framework, whose theoretical properties are formally established in the remainder of this section.

\subsection{Axioms for Fairness-Aware Evaluation}
\label{sec:axioms}

We consider an MPMOP involving $M$ DMs, indexed by $\mathcal{M}=\{1,\dots,M\}$. Each DM evaluates a candidate solution set $P$ using a common performance metric, interpreted from its own objective perspective. Let $u_m(P)\in\mathbb{R}_{+}$ denote the resulting utility for DM $m$, where larger values indicate better performance. An overall evaluation function aggregates individual utilities as
\[
\Psi(P) = \Psi\big(u_1(P),\dots,u_M(P)\big).
\]

The following axioms characterize structural properties that such an aggregation should satisfy in order to reflect fairness considerations inherent in MPMOPs.

\begin{axiom}[Pareto Monotonicity]
	\label{ax:pareto}
	For any two solution sets $P_1$ and $P_2$, if 
	\[
	u_i(P_1) \ge u_i(P_2) \ \forall i \in \mathcal{M}, 
	\]
	and there exists $j \in \mathcal{M}$ such that $u_j(P_1) > u_j(P_2)$, then
	\[
	\Psi(P_1) > \Psi(P_2).
	\]
\end{axiom}

\begin{axiom}[Symmetry]
	\label{ax:symmetry}
	For any permutation $\pi$ of $\mathcal{M}$, assuming that all DM-specific parameters are permuted consistently with their corresponding utilities,
	\[
	\Psi\big(u_1(P),\dots,u_M(P)\big) = \Psi\big(u_{\pi(1)}(P),\dots,u_{\pi(M)}(P)\big).
	\]
\end{axiom}

\begin{axiom}[Balance Preference]
	\label{ax:balance}
	Let $\mathbf{u}(P_1)$ and $\mathbf{u}(P_2)$ denote the utility vectors of two solution sets such that
	\[
	\sum_{m=1}^{M} u_m(P_1) = \sum_{m=1}^{M} u_m(P_2).
	\]
	If $\mathbf{u}(P_1)$ is obtained from $\mathbf{u}(P_2)$ by a mean-preserving contraction, then
	\[
	\Psi(P_1) > \Psi(P_2).
	\]
\end{axiom}

\begin{axiom}[Acceptability Monotonicity]
	\label{ax:consensus}
	Let $\mathcal{A} \subseteq \mathcal{X}$ denote the common acceptable region. 
	For any two solution sets $P_1, P_2 \subseteq \mathcal{X}$ such that
	\[
	P_2 = P_1 \cup \{\mathbf{x}'\} \setminus \{\mathbf{x}\}, \quad \mathbf{x} \in P_1 \cap \mathcal{A}, \quad \mathbf{x}' \notin \mathcal{A},
	\]
	then
	\[
	\Psi(P_1) > \Psi(P_2).
	\]
\end{axiom}

Collectively, these axioms encode rationality (Pareto Monotonicity), impartiality across DMs (Symmetry), preference for equitable compromise under fixed total utility (Balance Preference), and strict sensitivity to violations of mutual acceptability (Acceptability Monotonicity).

\subsection{Nash-Product-Based Evaluation Function}

The axioms impose strong structural constraints on $\Psi(\cdot)$. In particular, Pareto Monotonicity and Symmetry rule out purely ordinal or identity-dependent aggregations, while Balance Preference is incompatible with purely additive or mean-based aggregation schemes, which, as shown in Section~\ref{sec:limitations}, may assign similar scores to highly imbalanced outcomes. These requirements collectively make a multiplicative form of aggregation particularly suitable, due to its sensitivity to imbalance and proportional losses.

In MPMOP, cooperation typically does not generate surplus gains; instead, each DM incurs unavoidable performance losses relative to its individual Pareto-optimal set. Fairness therefore corresponds to distributing these losses in a balanced manner. To formalize this principle, we adopt the Nash product as a representative realization.

Let $L_m(P)\ge 0$ denote a loss-based performance metric for DM $m$, where smaller values indicate better performance and $L_m(P)=0$ corresponds to achieving the ideal PF. We define the utility of DM $m$ as
\begin{equation}
	u_m(P) = C -L_m(P),
\end{equation}
Here, $C$ is introduced solely to ensure positivity of utilities and does not affect the relative comparison between solution sets.

Following the Nash bargaining, we aggregate individual utilities multiplicatively and define the evaluation function as
\begin{equation}
	\Psi_\mathrm{NP}(P) = \prod_{m=1}^M u_m(P) = \prod_{m=1}^M \left( C -L_m(P)\right).
\end{equation}

This function penalizes disproportionate losses multiplicatively and strictly favors balanced compromises when total utility is fixed. Importantly, $\Psi_{\mathrm{NP}}$ is introduced as an \emph{evaluation metric}, rather than a  solution or decision rule.

\subsection{Consensus Solutions via Concession-Based Penalty}
%退让度的引入动机
If a strictly common Pareto-optimal solution exists, it represents the most stringent form of cooperation in a MPMOP, and an effective algorithm is expected to approximate it as closely as possible. However, such solutions may not always exist, and even when they do, fairness alone does not guarantee joint acceptability. In particular, balanced performance improvements may still impose unacceptable sacrifices on individual parties. This makes it necessary to explicitly characterize and quantify consensus in multi-party settings.

In many practical MPMOPs, the Pareto-optimal sets of different DMs exhibit little or no overlap. In such cases, cooperation can only be achieved if each party concedes part of its individually optimal outcome, allowing a mutually acceptable region to emerge. Moreover, cooperative outcomes typically involve asymmetric and explicitly quantifiable sacrifices across DMs, reflecting differences in how far each party deviates from its own Pareto-optimal set. Consequently, cooperative outcomes generally involve unequal and explicitly quantifiable levels of sacrifice across parties.

To formally characterize such sacrifices, we introduce the notion of a \emph{concession rate}, which measures the relative performance loss tolerated by each DM with respect to its own Pareto-optimal set.

\begin{definition}[Concession Rate]
	\label{def:concession_rate}
	Let $\mathcal{X}$ denote the feasible decision space, and let $\mathcal{P}_m^{*}$ be the Pareto-optimal set of DM $m$. For any candidate solution $\mathbf{x} \in \mathcal{X}$ and any reference solution $\mathbf{y} \in \mathcal{P}_m^{*}$, define the normalized component-wise deviation as
	\[
	\delta_m(\mathbf{x}, \mathbf{y}) =
	\max_{k \in K_m}
	\frac{f_{m,k}(\mathbf{x}) - f_{m,k}(\mathbf{y})}
	{f_{m,k}^{\max} - f_{m,k}^{\min}},
	\]
	where $K_m$ denotes the set of objective indices of DM $m$, and $f_{m,k}^{\max}$ and $f_{m,k}^{\min}$ are the bounds of the $k$-th objective over $\mathcal{X}$.
	
	The deviation of $\mathbf{x}$ from DM $m$'s Pareto-optimal set is defined as
	\[
	D_m(\mathbf{x}) =
	\min_{\mathbf{y} \in \mathcal{P}_m^{*}}
	\delta_m(\mathbf{x}, \mathbf{y}).
	\]
	
	Let $\mathcal{P}_{-m}^{*} = \bigcup_{j \neq m} \mathcal{P}_j^{*}$ denote the union of Pareto-optimal sets of all other DMs, and define
	\[
	\Delta_m =
	\max_{\mathbf{q} \in \mathcal{P}_{-m}^{*}} D_m(\mathbf{q}).
	\]
	
	The \emph{concession rate} of $\mathbf{x}$ with respect to DM $m$ is then given by
	\[
	\gamma_m(\mathbf{x}) =
	\frac{D_m(\mathbf{x})}{\Delta_m} \in [0,1].
	\]
\end{definition}

The concession rate quantifies the relative sacrifice tolerated by each DM in achieving cooperative outcomes. Unlike classical $\varepsilon$-based metrics in multi-objective optimization, which measure dominance relaxation or approximation quality~\cite{kollat2005value}, $\gamma_m$ explicitly captures inter-party compromise across heterogeneous Pareto-optimal sets.

Using the concession rate, we extend the classical notion of multi-party Pareto optimal solutions to explicitly incorporate acceptability constraints.

\begin{definition}[Generalized Multi-party Pareto-optimal Solutions]
	\label{def:generalized_mpmop_optima}
	Let $\mathcal{P}^* \subset \mathcal{X}$ denote the set of solutions that are non-dominated with respect to the joint objective vector
	\[
	F(\mathbf{x}) =
	(f_{1,1}(\mathbf{x}), \dots, f_{1,K_1}(\mathbf{x}), \dots,
	f_{M,1}(\mathbf{x}), \dots, f_{M,K_M}(\mathbf{x})).
	\]
	
	Given concession thresholds $\hat{\gamma}_m \in [0,1]$ for each DM $m$, define the extended acceptable set
	\[
	\mathcal{A}_m(\hat{\gamma}_m)
	=
	\{\mathbf{x} \in \mathcal{P}^* \mid \gamma_m(\mathbf{x}) \le \hat{\gamma}_m\}.
	\]
	
	The mutually acceptable region is defined as
	\[
	\mathcal{A}
	=
	\bigcap_{m=1}^{M} \mathcal{A}_m(\hat{\gamma}_m).
	\]
\end{definition}

If $\hat{\gamma}_m = 0$ for all $m$, the mutually acceptable region reduces to the classical strictly common Pareto-optimal set. If $\mathcal{A} = \emptyset$, no solution satisfies all DMs' concession thresholds simultaneously. Otherwise, each $\mathbf{x} \in \mathcal{A}$ represents a consensus solution that balances efficiency, fairness, and acceptability.

\begin{figure}
	\centering
	\includegraphics[width=1\linewidth]{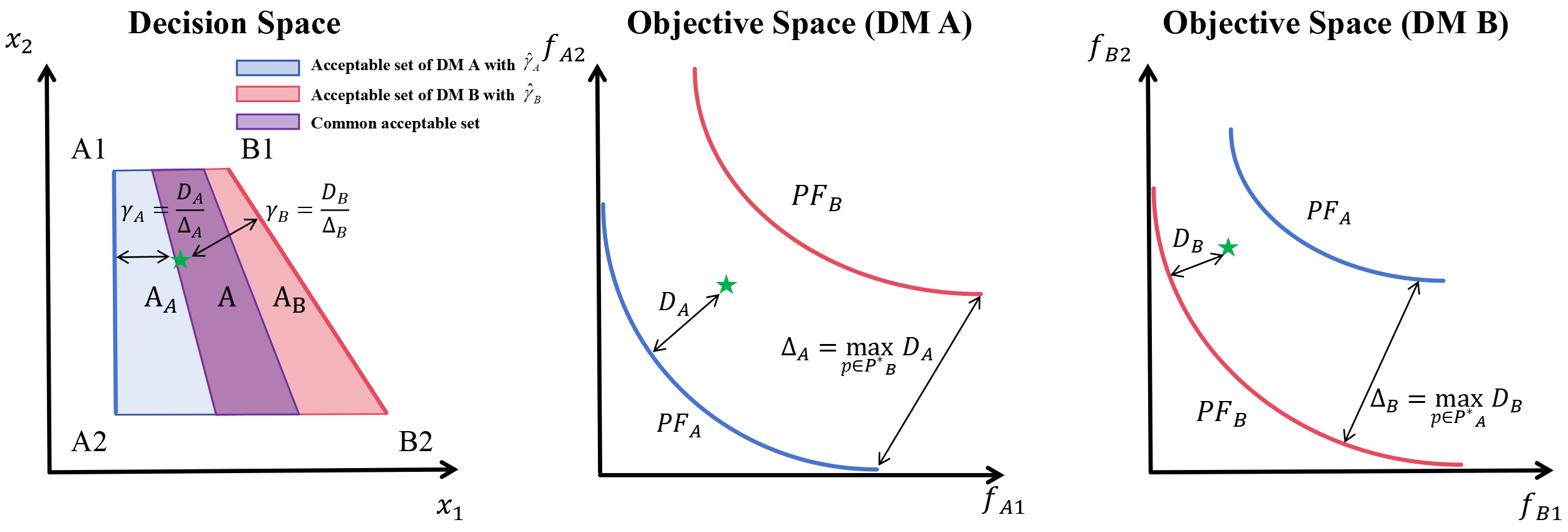}
	\caption{Illustration of concession rates and the mutually acceptable region. $\gamma_m$ quantifies the normalized deviation from it. The intersection $\mathcal{A}$ indicates solutions meeting all DMs' concession thresholds.}
	\label{fig:Common acceptable set}
\end{figure}

Having characterized the mutually acceptable region induced by concession constraints, we now incorporate acceptability into the Nash-product-based evaluation function $\Psi_{\mathrm{NP}}$, which introduced earlier satisfies Pareto monotonicity, symmetry, and balance preference, but does not explicitly penalize violations of acceptability.

\begin{definition}[Non-consensus Penalty]
	\label{def:non_consensus_penalty}
	For a solution $\mathbf{x}$ and DM $m$, define the acceptability violation as
	\[
	\phi_m(\mathbf{x}) = \max\{0,\, \gamma_m(\mathbf{x}) - \hat{\gamma}_m\}.
	\]
	
	For a solution set $P \subset \mathcal{X}$, the total non-consensus penalty incurred by DM $m$ is
	\[
	L_m^{\mathrm{pen}}(P)
	=
	\sum_{\mathbf{x} \in P} \phi_m(\mathbf{x}).
	\]
\end{definition}

Accordingly, the utility of DM $m$ is given by
\[
u_m(P)
=
C - (L_m(P) + \lambda_m L_m^{\mathrm{pen}}(P)),
\]
where $\lambda_m > 0$ controls the strictness of concession enforcement. The Nash-product evaluation function is redefined as
\[
\Psi_{\mathrm{NP}}(P)
=
\prod_{m=1}^{M}
\bigl(C - (L_m(P) + \lambda_m L_m^{\mathrm{pen}}(P))\bigr).
\]

This formulation preserves the structural properties of the original Nash-product-based criterion while explicitly discouraging non-consensus solutions. As a result, $\Psi_{\mathrm{NP}}$ provides a unified performance evaluation that jointly accounts for efficiency, fairness, and mutual acceptability in MPMOPs.

\subsection{Axiomatic Analysis of the Proposed Evaluation Criterion}

In this subsection, we provide that the Nash-product-based formulation satisfies the axioms. The four axioms are verified separately through Lemma~\ref{thm:pareto}--\ref{thm:acceptability}.

\begin{lemma}[Pareto Monotonicity]
	\label{thm:pareto}
	The Nash-product-based evaluation function $\Psi_{\mathrm{NP}}(P)$ satisfies Axiom~\ref{ax:pareto}.
\end{lemma}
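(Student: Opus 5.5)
The argument is a direct monotonicity check on the product $\Psi_{\mathrm{NP}}(P)=\prod_{m=1}^M u_m(P)$. The key standing assumption is that every utility is strictly positive, $u_m(P)>0$ for all $m$ and all admissible $P$. This is exactly the role of the constant $C$, which the paper introduces "solely to ensure positivity of utilities." I would state this explicitly at the start. Concretely, $C$ is chosen larger than the supremum of $L_m(P)+\lambda_m L_m^{\mathrm{pen}}(P)$ over the solution sets under comparison. This covers both the plain form $C-L_m(P)$ and the penalized form of $u_m$. Since the axiom is phrased purely in terms of the utility vectors, the proof never needs to look inside $L_m$ or $L_m^{\mathrm{pen}}$.

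Fix $P_1,P_2$ with $u_i(P_1)\ge u_i(P_2)>0$ for all $i\in\mathcal{M}$ and $u_j(P_1)>u_j(P_2)$ for some $j$. I would compare the two products by replacing factors one at a time. First write
\[
\Psi_{\mathrm{NP}}(P_1)=u_j(P_1)\prod_{i\neq j}u_i(P_1)\ \ge\ u_j(P_1)\prod_{i\neq j}u_i(P_2).
\]
This inequality holds because a product of positive numbers is nondecreasing in each factor: multiplying the inequalities $u_i(P_1)\ge u_i(P_2)>0$ termwise preserves the order. Next,
\[
u_j(P_1)\prod_{i\neq j}u_i(P_2)\ >\ u_j(P_2)\prod_{i\neq j}u_i(P_2)=\Psi_{\mathrm{NP}}(P_2).
\]
This step is strict because $\prod_{i\neq j}u_i(P_2)>0$ and $u_j(P_1)>u_j(P_2)$. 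Chaining the two displays gives $\Psi_{\mathrm{NP}}(P_1)>\Psi_{\mathrm{NP}}(P_2)$, which is Axiom~\ref{ax:pareto}.

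An equivalent and slightly cleaner route is to take logarithms: $\log\Psi_{\mathrm{NP}}=\sum_m\log u_m$ is a sum of strictly increasing functions, so it is strictly increasing in the componentwise-with-one-strict order. I would mention this as the alternative.

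The only real obstacle is the positivity requirement. If some $u_i(P_2)$ were zero or negative, the strict inequality could fail. For example, a zero factor kills the product, and two negative factors reverse the order. I would therefore make the positivity of every $u_m$ an explicit hypothesis, justified by the choice of $C$. I would also remark that it must bound the penalized loss as well, not just $L_m$, so that the lemma applies to the redefined $\Psi_{\mathrm{NP}}$ with the non-consensus penalty.
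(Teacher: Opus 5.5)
Your proposal is correct and takes the same approach as the paper, whose proof is the one-line observation that a product of positive numbers is strictly increasing in each component; your factor-by-factor chain simply spells that step out. Making positivity of every $u_m$ an explicit hypothesis, including for the penalized form, is a worthwhile clarification, since the paper leaves it implicit in the choice of $C$.
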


\begin{proof}
	Consider two solution sets $P_1$ and $P_2$ such that $u_m(P_1) \ge u_m(P_2)$ for all $m$, and there exists at least one DM $j$ with $u_j(P_1) > u_j(P_2)$.  
	Because the product of positive numbers is strictly increasing in each component, we have
	\[
	\prod_{m=1}^{M} u_m(P_1) > \prod_{m=1}^{M} u_m(P_2).
	\]
	Hence, $\Psi_{\mathrm{NP}}(P_1) > \Psi_{\mathrm{NP}}(P_2)$, establishing Pareto monotonicity.
\end{proof}

\begin{lemma}[Symmetry]
	\label{thm:symmetry}
	The Nash-product-based evaluation function $\Psi_{\mathrm{NP}}(P)$ satisfies Axiom~\ref{ax:symmetry}.
\end{lemma}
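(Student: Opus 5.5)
The plan is to reduce the claim to commutativity of the real product, after checking that relabelling the DMs leaves the multiset of factors unchanged. Fix a permutation $\pi$ of $\mathcal{M}$. The penalized utility of DM $m$ is
\[
u_m(P)=C-\bigl(L_m(P)+\lambda_m L_m^{\mathrm{pen}}(P)\bigr).
\]
It depends on $m$ only through DM-specific data. These are the loss metric $L_m$ (computed from $\mathrm{PF}_m$), the threshold $\hat{\gamma}_m$, the normalization $\Delta_m$ inside $\gamma_m$, and the weight $\lambda_m$. The constant $C$ is common to all DMs. Under the hypothesis of Axiom~\ref{ax:symmetry}, all of these parameters are permuted consistently with the utilities. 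So the relabelled problem assigns to position $m$ exactly the factor $u_{\pi(m)}(P)$, built from $L_{\pi(m)}$, $\lambda_{\pi(m)}$, $\hat{\gamma}_{\pi(m)}$ and $\Delta_{\pi(m)}$.

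Two checks are needed for this. The mutually acceptable region $\mathcal{A}=\bigcap_m \mathcal{A}_m(\hat{\gamma}_m)$ is an intersection over all $m$, so it is invariant under reindexing. For each DM $m$, $\Delta_m$ is defined through $\mathcal{P}^*_{-m}=\bigcup_{j\neq m}\mathcal{P}^*_j$, which is likewise invariant under relabelling the other parties. Hence $\gamma_m$, $\phi_m$ and $L_m^{\mathrm{pen}}$ are transported to $\gamma_{\pi(m)}$, $\phi_{\pi(m)}$ and $L^{\mathrm{pen}}_{\pi(m)}$ with no cross-party interference.

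With this in hand,
\[
\Psi_{\mathrm{NP}}\bigl(u_{\pi(1)}(P),\dots,u_{\pi(M)}(P)\bigr)=\prod_{m=1}^{M}u_{\pi(m)}(P)=\prod_{m=1}^{M}u_m(P).
\]
The second equality holds because $\pi$ is a bijection and multiplication of reals is commutative and associative, so reindexing a finite product does not change its value. This gives the identity required by Axiom~\ref{ax:symmetry}, and it holds both for the unpenalized $\Psi_{\mathrm{NP}}$ and for the penalized one.

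The only real obstacle is the bookkeeping step: confirming that no ingredient of $u_m$ breaks the symmetry under relabelling. The candidates are a party-dependent $C$ or an asymmetric definition of $\Delta_m$. I would address it by noting explicitly that $C$ is a single global constant and that $\mathcal{P}^*_{-m}$ depends on the other parties only as an unordered union. After that, the argument is a one-line appeal to commutativity.
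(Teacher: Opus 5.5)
Your proposal is correct and matches the paper's proof: permute the utilities together with all DM-specific parameters, then use commutativity of the finite product, $\prod_{m=1}^{M} u_{\pi(m)}(P)=\prod_{m=1}^{M} u_m(P)$. Your extra bookkeeping (that $C$ is a single global constant and that $\Delta_m$ depends on the other parties only through the unordered union $\mathcal{P}^*_{-m}$) just spells out what the paper leaves implicit in the phrase ``permuting all associated DM-specific parameters.''
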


\begin{proof}
	Let $\pi$ be any permutation of DM indices. Permuting both the utilities and all associated DM-specific parameters yields
	\[
	\prod_{m=1}^{M} u_{\pi(m)}(P) = \prod_{m=1}^{M} u_m(P),
	\]
	since multiplication is commutative. Therefore, the evaluation is invariant under any permutation of DMs, and symmetry is satisfied.
\end{proof}

\begin{lemma}[Balance Preference]
	\label{thm:balance}
	The Nash-product-based evaluation function $\Psi_{\mathrm{NP}}(P)$ satisfies Axiom~\ref{ax:balance}.
\end{lemma}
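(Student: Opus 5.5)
We prove that the product $\prod_{m} u_m$ is strictly Schur-concave on the positive orthant. We then read a mean-preserving contraction as a majorization relation between the two utility vectors. Concretely, we interpret ``$\mathbf{u}(P_1)$ is obtained from $\mathbf{u}(P_2)$ by a mean-preserving contraction'' in the standard sense. That is, $\mathbf{u}(P_1)$ arises from $\mathbf{u}(P_2)$ by a finite sequence of Robin Hood (Pigou--Dalton) transfers: each transfer moves an amount $t$ from a richer coordinate $u_i$ to a poorer coordinate $u_j$ with $0 < t \le (u_i - u_j)/2$, leaving the other coordinates fixed. Equivalently, $\mathbf{u}(P_1) = T\,\mathbf{u}(P_2)$ for a doubly stochastic $T$, with $\mathbf{u}(P_1)$ not a permutation of $\mathbf{u}(P_2)$. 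Each such transfer preserves the sum, which is consistent with the hypothesis $\sum_m u_m(P_1)=\sum_m u_m(P_2)$.

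The key step is a single-transfer computation. Take $u_i > u_j > 0$ and replace them by $u_i - t$ and $u_j + t$ with $0<t\le (u_i-u_j)/2$. Then
\[
(u_i - t)(u_j + t) - u_i u_j = t\,(u_i - u_j - t) > 0,
\]
since $u_i - u_j - t \ge (u_i-u_j)/2 > 0$. The remaining factors $\prod_{m\ne i,j} u_m$ are unchanged and strictly positive; positivity holds because $C$ is chosen so that every utility, including the penalty-augmented $C-(L_m+\lambda_m L_m^{\mathrm{pen}})$, is positive. Multiplying by these factors shows that $\Psi_{\mathrm{NP}}$ strictly increases under one transfer. Both resulting coordinates stay positive: $u_j+t>u_j>0$ and $u_i - t \ge (u_i+u_j)/2 > 0$. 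Hence the argument can be iterated. Chaining finitely many strict increases gives $\Psi_{\mathrm{NP}}(P_1) > \Psi_{\mathrm{NP}}(P_2)$.

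As an alternative that avoids choosing a decomposition, we can use $\log\Psi_{\mathrm{NP}} = \sum_m \log u_m$. Since $\log$ is strictly concave, Karamata's inequality, or Jensen's inequality applied row-wise to $T$, gives $\sum_m \log (T\mathbf{u})_m \ge \sum_m \log u_m$. Equality holds only when $T\mathbf{u}$ is a permutation of $\mathbf{u}$. Because $\log$ is strictly increasing, this yields the strict inequality for $\Psi_{\mathrm{NP}}$.

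The main obstacle is definitional rather than computational. The axiom does not specify what ``mean-preserving contraction'' means, and the strict inequality fails if the contraction is trivial, for instance a permutation or the identity. The proof must therefore fix the majorization/Pigou--Dalton interpretation and explicitly require that at least one nontrivial transfer occurs. We also need the Hardy--Littlewood--P\'olya fact that strict majorization decomposes into finitely many such transfers. This fact can be cited as standard, or bypassed entirely with the Karamata route.
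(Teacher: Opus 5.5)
Your proof is correct, and your fallback argument (strict concavity of $\log$ plus Jensen/Karamata) is essentially the paper's own route. The paper only observes that $\log\Psi_{\mathrm{NP}}=\sum_m \log u_m$ is strictly concave and invokes Jensen's inequality ``by the definition of mean-preserving contraction.'' It does not fix that definition. It also omits the Birkhoff or Karamata step needed to compare an arbitrary contraction, rather than full equalization to the mean, with its source vector, and it does not note that strictness requires the contraction to be nontrivial.

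Your primary route is genuinely different. You decompose the contraction into Pigou--Dalton transfers and check each transfer with the identity $(u_i-t)(u_j+t)-u_iu_j=t(u_i-u_j-t)>0$. This needs nothing beyond positivity of the untouched factors, and it keeps every intermediate vector in the positive orthant. The cost is that you rely on the Hardy--Littlewood--P\'olya decomposition when the contraction is given as $T\mathbf{u}(P_2)$ rather than as an explicit sequence of transfers. That equivalence holds only up to a permutation of coordinates, which is harmless here because $\Psi_{\mathrm{NP}}$ is symmetric.

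What your version gains is precision about the hypothesis, namely a nontrivial contraction. It also gives an explicit equality case, $T\mathbf{u}$ being a permutation of $\mathbf{u}$. The paper's one-line proof leaves both implicit.
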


\begin{proof}
	Consider two solution sets $P_1$ and $P_2$ with equal total utility and such that $\mathbf{u}(P_1)$ is obtained from $\mathbf{u}(P_2)$ by a mean-preserving contraction.  
	The logarithm of the Nash product is strictly concave:
	\[
	\log \Psi_{\mathrm{NP}}(P_1) = \sum_{m=1}^{M} \log u_m(P_1).
	\]
	By the definition of mean-preserving contraction and Jensen's inequality, we have
	\[
	\sum_{m=1}^{M} \log u_m(P_1) > \sum_{m=1}^{M} \log u_m(P_2),
	\]
	and exponentiating both sides yields
	\[
	\Psi_{\mathrm{NP}}(P_1) > \Psi_{\mathrm{NP}}(P_2),
	\]
	as required.
\end{proof}

\begin{lemma}[Acceptability Monotonicity]
	\label{thm:acceptability}
	The Nash-product-based evaluation function $\Psi_{\mathrm{NP}}(P)$ satisfies Axiom~\ref{ax:consensus} for sufficiently large penalty coefficients $\{\lambda_m\}_{m=1}^M$.
\end{lemma}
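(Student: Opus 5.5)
The plan is to compare $\Psi_{\mathrm{NP}}(P_1)$ and $\Psi_{\mathrm{NP}}(P_2)$ factor by factor. We then choose the $\lambda_m$ large enough that the extra penalty incurred by $\mathbf{x}'$ dominates any possible change in the base losses $L_m$.

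\emph{Step 1: penalty bookkeeping.} Since $P_2 = P_1\cup\{\mathbf{x}'\}\setminus\{\mathbf{x}\}$ and $L_m^{\mathrm{pen}}$ is a sum over elements,
\[
L_m^{\mathrm{pen}}(P_2)-L_m^{\mathrm{pen}}(P_1)=\phi_m(\mathbf{x}')-\phi_m(\mathbf{x}).
\]
Because $\mathbf{x}\in\mathcal{A}$, we have $\gamma_m(\mathbf{x})\le\hat\gamma_m$ for every $m$, so $\phi_m(\mathbf{x})=0$ and the difference is $\phi_m(\mathbf{x}')\ge 0$. Because $\mathbf{x}'\notin\mathcal{A}$, there is some index $j$ with $\phi_j(\mathbf{x}')>0$.

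Here the definition of $\mathcal{A}$ requires membership in $\mathcal{P}^*$, so $\mathbf{x}'\notin\mathcal{A}$ could also happen because $\mathbf{x}'\notin\mathcal{P}^*$ while all its concession rates are within threshold. In that case $\phi_m(\mathbf{x}')=0$ for every $m$ and the penalty gives no leverage. I would handle this by restricting attention, as the axiom implicitly intends, to $\mathbf{x}'$ violating some concession threshold. Alternatively, one can note that the penalty definition is meant to encode exactly this violation, and add this as a standing interpretation.

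\emph{Step 2: bound the base-loss change.} The base loss $L_m$ (IGD-type or $C-\mathrm{HV}$-type) is bounded on the bounded objective region. Hence there is a constant $B_m$ with $|L_m(P_2)-L_m(P_1)|\le B_m$, for instance the diameter bound $B_m=\max L_m-\min L_m$ over admissible sets. Writing $u_m(P)=C-L_m(P)-\lambda_m L^{\mathrm{pen}}_m(P)$ gives
\[
u_m(P_1)-u_m(P_2)\ \ge\ \lambda_m\phi_m(\mathbf{x}')-B_m .
\]

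\emph{Step 3: choose $\lambda_m$ and conclude.} This is the main obstacle. For indices with $\phi_m(\mathbf{x}')=0$ the penalty gives nothing, so $u_m$ could increase by up to $B_m$. Strict Pareto monotonicity (Lemma~\ref{thm:pareto}) therefore cannot be applied directly, and the ratio of the products has to be controlled instead. I would work with
\[
\frac{\Psi_{\mathrm{NP}}(P_2)}{\Psi_{\mathrm{NP}}(P_1)}=\prod_m\frac{u_m(P_2)}{u_m(P_1)} .
\]
Each non-violating factor is at most $(C+B)/c_0$, where $c_0>0$ is a lower bound on the utilities (guaranteed by the positivity choice of $C$). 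The violating factor is at most $(u_j(P_1)-\lambda_j\phi_j(\mathbf{x}')+B_j)/u_j(P_1)$, and this tends to $0$, or crosses the positivity floor, as $\lambda_j\to\infty$.

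Two details remain. First, $\phi_j(\mathbf{x}')$ must be bounded below. For a fixed pair it is a positive number; uniformly, it is bounded below if one restricts to a finite reference set or to violations of at least a fixed margin. Second, the requirement that $C$ keep utilities positive has to stay consistent with a large $\lambda_m$. I would take $C$ to be chosen after $\lambda$, so $C > \max_m(L_m+\lambda_m L_m^{\mathrm{pen}})$ over the admissible sets. Once $\lambda_j$ is large enough, the product ratio drops below $1$, which gives $\Psi_{\mathrm{NP}}(P_1)>\Psi_{\mathrm{NP}}(P_2)$.
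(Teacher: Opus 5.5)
Your route is the same as the paper's. You compare $\Psi_{\mathrm{NP}}(P_2)/\Psi_{\mathrm{NP}}(P_1)$ factor by factor. You bound the factors of DMs that $\mathbf{x}'$ does not penalize by a constant built from a positive lower bound on utilities (the paper's $\alpha^{M-1}$, built from $\underline{u}_m$). You then push the violated DM's factor down by enlarging $\lambda_{m_0}$.

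Your Step~1 is sharper than the paper's. The paper simply asserts that $\mathbf{x}'\notin\mathcal{A}$ gives some $\phi_{m_0}(\mathbf{x}')>0$, and it silently uses $\phi_m(\mathbf{x})=0$. You are right that a dominated $\mathbf{x}'\notin\mathcal{P}^*$ lying within all thresholds defeats the statement for every $\lambda$, so your restriction is a needed correction.

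\textbf{The gap is in closing Step~3.} Neither ordering of the choices of $C$ and $\lambda$ works with your bounds as written.
\begin{itemize}
\item \emph{$C$ fixed.} Letting $\lambda_j\to\infty$ makes $u_j(P_2)$ negative, and this is not harmless. Take $M=2$, $P_1\subseteq\mathcal{A}$, and $\mathbf{x}'$ violating both thresholds. Then both factors of $\Psi_{\mathrm{NP}}(P_2)$ tend to $-\infty$, so $\Psi_{\mathrm{NP}}(P_2)\to+\infty$.
\item \emph{$C$ chosen after $\lambda$, as you propose.} Now $C$ grows linearly in $\lambda$. Your bound $(C+B)/c_0$ blows up when $c_0$ is the fixed safety margin. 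Moreover, $u_j(P_1)$ also grows linearly in $\lambda$, so the violated factor does not tend to $0$; it tends to a constant in $[0,1)$ that is in general nonzero.
\end{itemize}
Hence ``the product ratio drops below $1$'' does not follow. The paper's proof has the same hole: its threshold on $\lambda_{m_0}$ contains $u_{m_0}(P_1)$, which itself depends on $\lambda_{m_0}$ and on $C$.

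\textbf{A repair.} Work with sets of fixed size $N$, and put $\bar L_m=\sup L_m$ and $\bar\phi_m=\sup\phi_m$. Take a common $\lambda$ and set $C=\max_m(\bar L_m+\lambda N\bar\phi_m)+\xi$, so every utility stays $\ge\xi$. Dividing by $\lambda$ gives $u_m(P)/\lambda\to\Lambda-L_m^{\mathrm{pen}}(P)$, where $\Lambda=N\max_m\bar\phi_m>0$ (positive since $\phi_j(\mathbf{x}')>0$). At $P_1$ this limit is strictly positive for every $m$, precisely because $\mathbf{x}\in P_1$ is penalty-free: $L_m^{\mathrm{pen}}(P_1)\le(N-1)\bar\phi_m<\Lambda$. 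Consequently
\[
\lim_{\lambda\to\infty}\frac{\Psi_{\mathrm{NP}}(P_2)}{\Psi_{\mathrm{NP}}(P_1)}
=\prod_{m:\,\phi_m(\mathbf{x}')>0}\Bigl(1-\frac{\phi_m(\mathbf{x}')}{\Lambda-L_m^{\mathrm{pen}}(P_1)}\Bigr)<1,
\]
so the strict inequality holds for all sufficiently large $\lambda$. In effect, the evaluation becomes, to leading order, the Nash product of the pure penalty utilities $\Lambda-L_m^{\mathrm{pen}}(P)$, and Pareto monotonicity for those gives the result. Note that what controls the unviolated factors is the finer bound $1+B_m/u_m(P_1)\to 1$, not $(C+B)/c_0$.
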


\begin{proof}
	Consider two solution sets
	\[
	P_2 = P_1 \cup \{\mathbf{x}'\} \setminus \{\mathbf{x}\},
	\quad
	\mathbf{x} \in P_1 \cap \mathcal{A},
	\quad
	\mathbf{x}' \notin \mathcal{A}.
	\]
	
	By definition of $\mathcal{A}$, there exists at least one DM $m_0 \in \{1,\dots,M\}$ such that
	\[
	\phi_{m_0}(\mathbf{x}') = \max\{0, \gamma_{m_0}(\mathbf{x}') - \hat{\gamma}_{m_0}\} > 0.
	\]
	
	The utility of DM $m$ is
	\[
	u_m(P) = C - \big(L_m(P) + \lambda_m L_m^{\mathrm{pen}}(P)\big).
	\]
	
	Since $P_1$ and $P_2$ differ in exactly one solution, the losses change as
	\[
	L_{m_0}(P_2) = L_{m_0}(P_1) + \mu_{m_0}(\mathbf{x}'),
	\]
	\[
	L_{m_0}^{\mathrm{pen}}(P_2) = L_{m_0}^{\mathrm{pen}}(P_1) + \phi_{m_0}(\mathbf{x}').
	\]
	Hence,
	\begin{equation}
		\label{eq:u_m0_diff}
		u_{m_0}(P_2) = u_{m_0}(P_1) - \big(\mu_{m_0}(\mathbf{x}') + \lambda_{m_0} \phi_{m_0}(\mathbf{x}')\big).
	\end{equation}
	
	For all $m \neq m_0$, let $\mu_m(\mathbf{x}') \ge 0$ denote the maximal possible utility increase due to replacing $\mathbf{x}$ by $\mathbf{x}'$. Let $\underline{u}_m$ denote a positive lower bound of $u_m(P_1)$ ensured by the choice of $C$. Then
	\[
	u_m(P_2) \le u_m(P_1) + \mu_m(\mathbf{x}') \le \underline{u}_m + \mu_m(\mathbf{x}').
	\]
	
	By bounding each factor with the lower bound $\underline{u}_m$, we obtain
	\[
	\prod_{m \neq m_0} \left(1 + \frac{\mu_m(\mathbf{x}')}{u_m(P_1)}\right)
	\le \prod_{m \neq m_0} \left(1 + \frac{\mu_m(\mathbf{x}')}{\underline{u}_m}\right)
	= \alpha^{M-1}.
	\]
	
	Then a sufficient condition for strict decrease of Nash-product ratio is
	\[
	\frac{\Psi_{\mathrm{NP}}(P_2)}{\Psi_{\mathrm{NP}}(P_1)}
	=
	\left(1-\frac{\mu_{m_0}(\mathbf{x}') + \lambda_{m_0} \phi_{m_0}(\mathbf{x}')}{u_{m_0}(P_1)}\right)
	\alpha^{M-1} < 1.
	\]
	
	which yields
	\[
	\lambda_{m_0} > \frac{u_{m_0}(P_1) \big(1 - \alpha^{1-M}\big) - \mu_{m_0}(\mathbf{x}')}{\phi_{m_0}(\mathbf{x}')}.
	\]
	
	Therefore, choosing $\lambda_{m_0}$ sufficiently large ensures
	$\Psi_{\mathrm{NP}}(P_2) < \Psi_{\mathrm{NP}}(P_1)$.
\end{proof}

\begin{remark}
	Lemma~\ref{thm:acceptability} establishes a sufficient condition on the penalty coefficients $\lambda_m$ under which acceptability violations dominate any potential utility gains of other DMs.
	The bound is derived from a worst-case analysis and is not intended to be tight.
	In practice, moderate penalty values are sufficient to enforce acceptability, while the existence result provides theoretical consistency of the proposed evaluation function.
\end{remark}

The above result formally shows that the Nash-product-based evaluation function simultaneously captures efficiency improvement (Pareto monotonicity), fair treatment of DMs (symmetry), balanced compromise across parties (balance preference), and concession rationality for DMs (acceptability monotonicity), which are essential for evaluating solution sets in MPMOPs. Thus, we conclude that the proposed Nash-product-based evaluation metric framework identifies the utility-balanced solution set among all equally efficient outcomes. This provides a theoretically grounded and consistent measure of fairness in multi-party multi-objective optimization.

\begin{theorem}[Axiomatic Soundness of the Proposed Evaluation Framework]
	\label{thm:axiomatic_soundness}
	The fairness-aware Nash-product-based evaluation function
	\[
	\Psi_{\mathrm{NP}}(P)
	=
	\prod_{m=1}^{M}
	\bigl(C - (L_m(P) + \lambda_m L_m^{\mathrm{pen}}(P))\bigr).
	\]
	satisfies all four axioms introduced in Section~\ref{sec:axioms}, namely Pareto Monotonicity (A1), Symmetry (A2), Balance Preference (A3), and Acceptability Monotonicity (A4).
\end{theorem}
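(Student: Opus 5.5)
The theorem is essentially a conjunction of the four lemmas already established, so the plan is to assemble it from Lemma~\ref{thm:pareto}, Lemma~\ref{thm:symmetry}, Lemma~\ref{thm:balance} and Lemma~\ref{thm:acceptability}. The one point needing care is that the first three lemmas were stated for the Nash product of generic positive utilities. The theorem concerns the specific penalized utilities $u_m(P)=C-(L_m(P)+\lambda_m L_m^{\mathrm{pen}}(P))$.

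I will first fix the standing assumption that $C$ is chosen large enough that every $u_m(P)>0$ on the class of solution sets under consideration. This is possible because $L_m$ and $L_m^{\mathrm{pen}}$ are bounded for finite $P$ of bounded size, since each $\phi_m\le 1$. With this in place, $\Psi_{\mathrm{NP}}$ is the map $\mathbf{u}\mapsto\prod_m u_m$ on $\mathbb{R}_{>0}^M$ composed with the penalized utility vector. Axioms~\ref{ax:pareto}--\ref{ax:balance} are phrased purely in terms of the utility vector, so they depend only on the aggregator.

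The steps then follow directly:
\begin{itemize}
\item \textbf{A1.} Strict monotonicity of the product on the positive orthant gives Pareto Monotonicity, as in Lemma~\ref{thm:pareto}.
\item \textbf{A2.} Commutativity, with $\lambda_m$ and $\hat\gamma_m$ permuted together with the utilities, gives Symmetry, as in Lemma~\ref{thm:symmetry}.
\item \textbf{A3.} Strict concavity of $\sum_m\log u_m$ gives Balance Preference. I would make this precise by reading a mean-preserving contraction as $\mathbf{u}(P_1)=T\,\mathbf{u}(P_2)$ with $T$ doubly stochastic and $\mathbf{u}(P_1)$ not a permutation of $\mathbf{u}(P_2)$, that is, $\mathbf{u}(P_1)$ strictly majorized by $\mathbf{u}(P_2)$. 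The strict Schur-concavity of $\sum\log u_m$ then yields the strict inequality; Jensen alone only gives it after excluding the degenerate permutation case.
\end{itemize}

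For A4 I invoke Lemma~\ref{thm:acceptability}, which holds for sufficiently large $\lambda_m$. The theorem should therefore be read with this proviso: there exist thresholds $\bar\lambda_m$ such that the statement holds whenever $\lambda_m>\bar\lambda_m$. Raising the $\lambda_m$ does not interfere with A1--A3, because those are statements about the aggregator given the utilities. The only coupling is positivity, so $C$ must be chosen after the $\lambda_m$, large enough that the utilities stay positive.

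This ordering is the main obstacle. In Lemma~\ref{thm:acceptability} the bound on $\lambda_{m_0}$ involves $u_{m_0}(P_1)$, which itself grows with $C$. I would handle it by keeping $C$ fixed and noting two things. First, the required $\lambda_{m_0}$ can be bounded uniformly using $\phi_{m_0}(\mathbf{x}')\ge\epsilon$ over a class where violations are bounded away from zero, or otherwise by stating the result per pair $(P_1,P_2)$. Second, each increase of $\lambda$ only lowers utilities by at most $\lambda_m|P|$. A consistent choice is therefore $C>\max_m(\sup L_m+\lambda_m|P|_{\max})$. If uniformity fails, I would present A4 in the same pointwise "sufficiently large" form as the lemma. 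With that, the four verifications combine to give the theorem.
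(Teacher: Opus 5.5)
Your proposal follows the paper's proof exactly (the theorem is just the conjunction of Lemmas~\ref{thm:pareto}--\ref{thm:acceptability}) and is correct as a derivation from those lemmas; the $C$--$\lambda$ circularity you flag, which your ``keep $C$ fixed'' step leaves unresolved, closes cleanly if you first fix $\lambda$ by the $C$-independent condition $\lambda_{m_0}\phi_{m_0}(\mathbf{x}')>\sum_m|L_m(P_2)-L_m(P_1)|$ and only then enlarge $C$, since $\prod_m(C-a_m)-\prod_m(C-b_m)=C^{M-1}\sum_m(b_m-a_m)+O(C^{M-2})$ with $a_m,b_m$ the penalized losses of $P_1,P_2$. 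Be aware of one hole you inherit from Lemma~\ref{thm:acceptability}: since $\mathcal{A}\subseteq\mathcal{P}^*$, a jointly dominated $\mathbf{x}'$ lies outside $\mathcal{A}$ yet can satisfy $\gamma_m(\mathbf{x}')\le\hat{\gamma}_m$ for all $m$ once the thresholds are positive, so $\phi_m(\mathbf{x}')=0$ for every $m$, the penalty no longer separates $P_1$ from $P_2$, and A4 then fails for every $\lambda$ whenever, e.g., $\mathbf{x}$ is redundant in $P_1$ while $\mathbf{x}'$ improves some DM's coverage without worsening the others'.
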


\begin{proof}
	Lemmas~\ref{thm:pareto}--\ref{thm:balance} establish that the Nash-product aggregation satisfies Axioms~(A1)--(A3). Lemma~\ref{thm:acceptability} further shows that the introduction of non-consensus penalties enforces Acceptability Monotonicity (A4). Therefore, the proposed evaluation function satisfies all axioms simultaneously.
\end{proof}

\section{Experiments}
\label{sec:experiments}

This section presents comprehensive experiments to demonstrate the effectiveness of the proposed fairness-aware evaluation framework.
Two representative classes of multi-party optimization problems are considered. These experiments complement the theoretical analysis and evaluate the practical behavior of the proposed framework under different consensus conditions.

\subsection{Experimental Design and Setup}

\subsubsection{Test Problems}

Two categories of benchmark problems are considered.
The first category consists of continuous multi-party multi-objective optimization problems (MPMOPs), including MPMOP1--MPMOP11 from~\cite{liu2020evolutionary} and six newly constructed benchmarks, MPMOP12--MPMOP17, designed to cover non-consensus scenarios. 
The second category focuses on multi-party distance minimization problems (MPDMPs), including MPDMP1--MPDMP12 from~\cite{she2023multiparty} and six additional instances, MPDMP13--MPDMP18.
Detailed definitions of the extended benchmark problems are provided in the supplement document.
For all benchmarks, each algorithm is independently run 30 times, and the mean and variance of the results are reported.

\subsubsection{Compared Algorithms}

Several representative algorithms are selected for comparison.
For the MPMOP experiments, OptAll, which aggregates all DMs' objectives into a single multi-objective optimization problem, and OptMPDNS~\cite{liu2020evolutionary}, one of the earliest algorithms tailored for multi-party multi-objective optimization, are considered.
For the MPDMP experiments, OptAll and OptMPDNS are further compared with two improved variants, OptMPDNS2~\cite{she2021new} and OptMPDNS3~\cite{she2023multiparty}. 
OptMPDNS2 extends OptMPDNS by incorporating nondominated sorting information from all parties, while OptMPDNS3 further enhances the search process using a PFt-based initialization strategy and an adaptive differential evolution operator. Among these methods, OptMPDNS3 has been reported to achieve strong performance on MPDMPs with existing consensus solutions.

\subsubsection{Performance metrics}

All obtained solution sets are evaluated under two classes of metrics.
\begin{itemize}
	\item Classical metrics: meanIGD, computed as the arithmetic mean across all DMs.
	\item Fairness-aware metric: the proposed Nash-product-based evaluation function $\Psi_{\mathrm{NP}}$, where larger values indicate better-balanced inter-party compromise.
\end{itemize}

\subsubsection{Parameter Settings}

To ensure a fair comparison, all algorithms are implemented under identical experimental settings.
The penalty factor $\lambda$ and the constant $C$ are specified in the supplement document. To ensure consistency, the concession rates of all DMs are set to zero in the original benchmarks. All remaining parameters follow the configurations reported in the corresponding original studies, unless otherwise stated.

Furthermore, to evaluate algorithmic robustness in scenarios without common solutions, six additional benchmark problems are constructed. For each problem, six test cases with different concession rate configurations are designed. The specific parameter configurations are detailed in the supplement document.

\subsection{Experimental Results}

This subsection reports and analyzes the experimental results on both MPMOPs and MPDMPs, focusing on the comparison between the proposed Nash-product-based evaluation function ($\Psi_\mathrm{NP}$) and the conventional meanIGD metric.

\subsubsection{Results on MPMOP Benchmarks}

\begin{table}[htbp]
	\centering
	\caption{Comparison of OptMPNDS and OptAll under $\Psi_{\mathrm{NP}}$ and meanIGD for MPMOP benchmarks ($D = 2$).}
	\label{tab:MPMOP_results}
	\renewcommand{\arraystretch}{1.1}
	\setlength{\tabcolsep}{2pt}
	\footnotesize
	\begin{tabular}{@{}lccc@{}}
		\toprule
		Problem & Metric & \multicolumn{1}{c}{OptMPNDS} & \multicolumn{1}{c}{OptAll} \\
		\midrule
		\multirow{2}{*}{MPMOP1} 
		& $\Psi_{\mathrm{NP}}$ & $\mathbf{4.21\text{E+}07\ (5.94\text{E+}06)}$ & $7.79\text{E+}05\ (1.37\text{E+}06)$ \\
		& meanIGD & $1.44\text{E+}00\ (2.96\text{E-}01)$ & $\mathbf{9.52\text{E-}01\ (2.56\text{E-}02)}$ \\
		\midrule
		\multirow{2}{*}{MPMOP2} 
		& $\Psi_{\mathrm{NP}}$ & $\mathbf{8.11\text{E+}02\ (2.49\text{E+}01)}$ & $5.26\text{E+}01\ (2.13\text{E+}01)$ \\
		& meanIGD & $9.36\text{E-}02\ (1.72\text{E-}02)$ & $\mathbf{7.53\text{E-}02\ (9.64\text{E-}03)}$ \\
		\midrule
		\multirow{2}{*}{MPMOP3} 
		& $\Psi_{\mathrm{NP}}$ & $\mathbf{6.12\text{E+}05\ (3.26\text{E+}03)}$ & $9.02\text{E+}04\ (2.65\text{E+}04)$ \\
		& meanIGD & $\mathbf{1.08\text{E-}01\ (3.16\text{E-}04)}$ & $1.15\text{E-}01\ (1.85\text{E-}03)$ \\
		\midrule
		\multirow{2}{*}{MPMOP4} 
		& $\Psi_{\mathrm{NP}}$ & $1.18\text{E+}01\ (5.58\text{E+}00)$ & $\mathbf{1.24\text{E+}01\ (8.35\text{E+}00)}$ \\
		& meanIGD & $\mathbf{6.59\text{E-}02\ (2.77\text{E-}03)}$ & $6.63\text{E-}02\ (2.87\text{E-}03)$ \\
		\midrule
		\multirow{2}{*}{MPMOP5} 
		& $\Psi_{\mathrm{NP}}$ & $8.86\text{E+}01\ (1.85\text{E+}01)$ & $\mathbf{8.87\text{E+}01\ (1.38\text{E+}01)}$ \\
		& meanIGD & $\mathbf{1.05\text{E-}01\ (5.40\text{E-}03)}$ & $1.06\text{E-}01\ (5.11\text{E-}03)$ \\
		\midrule
		\multirow{2}{*}{MPMOP6} 
		& $\Psi_{\mathrm{NP}}$ & $\mathbf{7.10\text{E+}00\ (2.08\text{E+}00)}$ & $6.32\text{E+}00\ (2.16\text{E+}00)$ \\
		& meanIGD & $\mathbf{1.30\text{E-}01\ (5.97\text{E-}03)}$ & $1.32\text{E-}01\ (6.97\text{E-}03)$ \\
		\midrule
		\multirow{2}{*}{MPMOP7} 
		& $\Psi_{\mathrm{NP}}$ & $\mathbf{2.87\text{E+}12\ (2.25\text{E+}11)}$ & $1.51\text{E+}09\ (6.23\text{E+}09)$ \\
		& meanIGD & $\mathbf{1.27\text{E+}00\ (4.44\text{E-}01)}$ & $1.28\text{E+}00\ (2.06\text{E-}01)$ \\
		\midrule
		\multirow{2}{*}{MPMOP8} 
		& $\Psi_{\mathrm{NP}}$ & $\mathbf{4.91\text{E+}05\ (6.40\text{E+}03)}$ & $3.68\text{E+}03\ (3.06\text{E+}03)$ \\
		& meanIGD & $\mathbf{1.97\text{E-}01\ (2.74\text{E-}02)}$ & $2.50\text{E-}01\ (3.68\text{E-}02)$ \\
		\midrule
		\multirow{2}{*}{MPMOP9} 
		& $\Psi_{\mathrm{NP}}$ & $4.20\text{E+}13\ (7.97\text{E+}13)$ & $\mathbf{4.78\text{E+}13\ (8.21\text{E+}13)}$ \\
		& meanIGD & $\mathbf{1.95\text{E-}01\ (3.96\text{E-}03)}$ & $1.95\text{E-}01\ (4.22\text{E-}03)$ \\
		\midrule
		\multirow{2}{*}{MPMOP10} 
		& $\Psi_{\mathrm{NP}}$ & $\mathbf{3.20\text{E+}03\ (5.91\text{E+}02)}$ & $3.16\text{E+}03\ (5.58\text{E+}02)$ \\
		& meanIGD & $1.42\text{E-}01\ (8.52\text{E-}03)$ & $\mathbf{1.40\text{E-}01\ (8.33\text{E-}03)}$ \\
		\midrule
		\multirow{2}{*}{MPMOP11} 
		& $\Psi_{\mathrm{NP}}$ & $\mathbf{8.10\text{E+}05\ (6.96\text{E+}03)}$ & $1.09\text{E+}05\ (4.90\text{E+}04)$ \\
		& meanIGD & $\mathbf{1.97\text{E-}01\ (8.66\text{E-}03)}$ & $2.91\text{E-}01\ (2.08\text{E-}02)$ \\
		\bottomrule
	\end{tabular}
\end{table}

Table~\ref{tab:MPMOP_results} shows the evaluation results of OptMPNDS and OptAll on MPMOP1--MPMOP11 under both $\Psi_\mathrm{NP}$ and meanIGD. For most problems, the solutions achieving the best $\Psi_\mathrm{NP}$ and the best meanIGD do not coincide. Only in a few cases (MPMOP3, MPMOP6, MPMOP7, MPMOP8 and MPMOP11) does OptMPNDS achieve superior performance under both metrics simultaneously. This discrepancy indicates meanIGD emphasizes proximity and uniformity relative to individual DMs' PFs, while $\Psi_\mathrm{NP}$ additionally accounts for fairness and penalizes solutions falling outside commonly acceptable regions.

\begin{figure}
	\centering
	\includegraphics[width=1\linewidth]{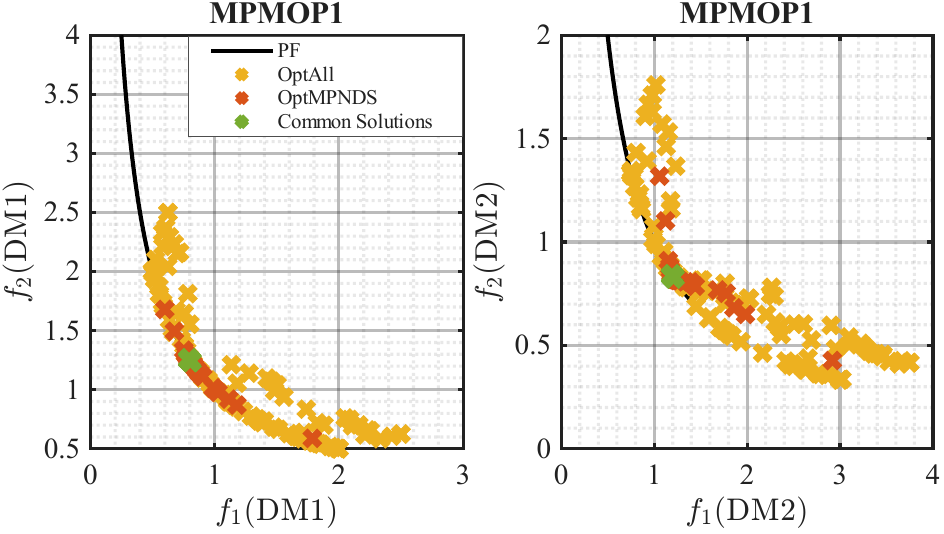}
	\caption{Representative solution sets of OptMPNDS and OptAll on objective space of MPMOP1.}
	\label{fig:5.1.1}
\end{figure}

To illustrate these differences, Fig.~\ref{fig:5.1.1} shows representative solutions on MPMOP1. The solution set produced by OptAll (yellow points) is more evenly distributed along the PFs of individual DMs, even though it does not fully lie within the mutually acceptable region. In contrast, OptMPNDS (red points) generates solutions that are more concentrated around the common acceptable region (green points). Consequently, OptAll attains a better meanIGD score due to its emphasis on coverage of individual PFs, whereas OptMPNDS achieves a higher $\Psi_\mathrm{NP}$ by avoiding non-acceptable solutions and focusing on fairness-sensitive compromises.

The evaluation is further extended to non-consensus scenarios with MPMOP12--MPMOP17, as shown in Fig.~\ref{fig:5.1.2}, where six cases of concession thresholds are considered to progressively enlarge the acceptable region. In most cases, OptAll achieves higher scores under $\Psi_\mathrm{NP}$. In the absence of common solutions, however, OptMPNDS tends to converge to the PS of one DM, resulting in asymmetric concessions, one DM incurs heavy penalties while the other receives minor penalties, leading to a lower Nash-product-based score. By contrast, OptAll, which simultaneously considers all four objectives, distributes more evenly between DMs. As a result, the product of penalties is relatively smaller, yielding higher $\Psi_\mathrm{NP}$ values. As the concession threshold increases, scores of both algorithms rise. The performance gap gradually narrows, especially for MPMOP13, MPMOP14, and MPMOP17. This trend confirms that the unfairness of OptMPNDS in non-consensus problems is alleviated as the acceptable region expands, allowing previously heavily penalized solutions to contribute positively to the overall score. Fig.~\ref{fig:5.1.3} presents the PSs and solution distributions for MPMOP12 and MPMOP13. OptAll produces solutions that are relatively uniform across the objective space, while OptMPNDS tends to generate solutions closer to the PS of DM2. This observation aligns with the trends reported in Fig.~\ref{fig:5.1.2}.

\begin{figure}[htbp]
	\centering
	\includegraphics[width=1\linewidth]{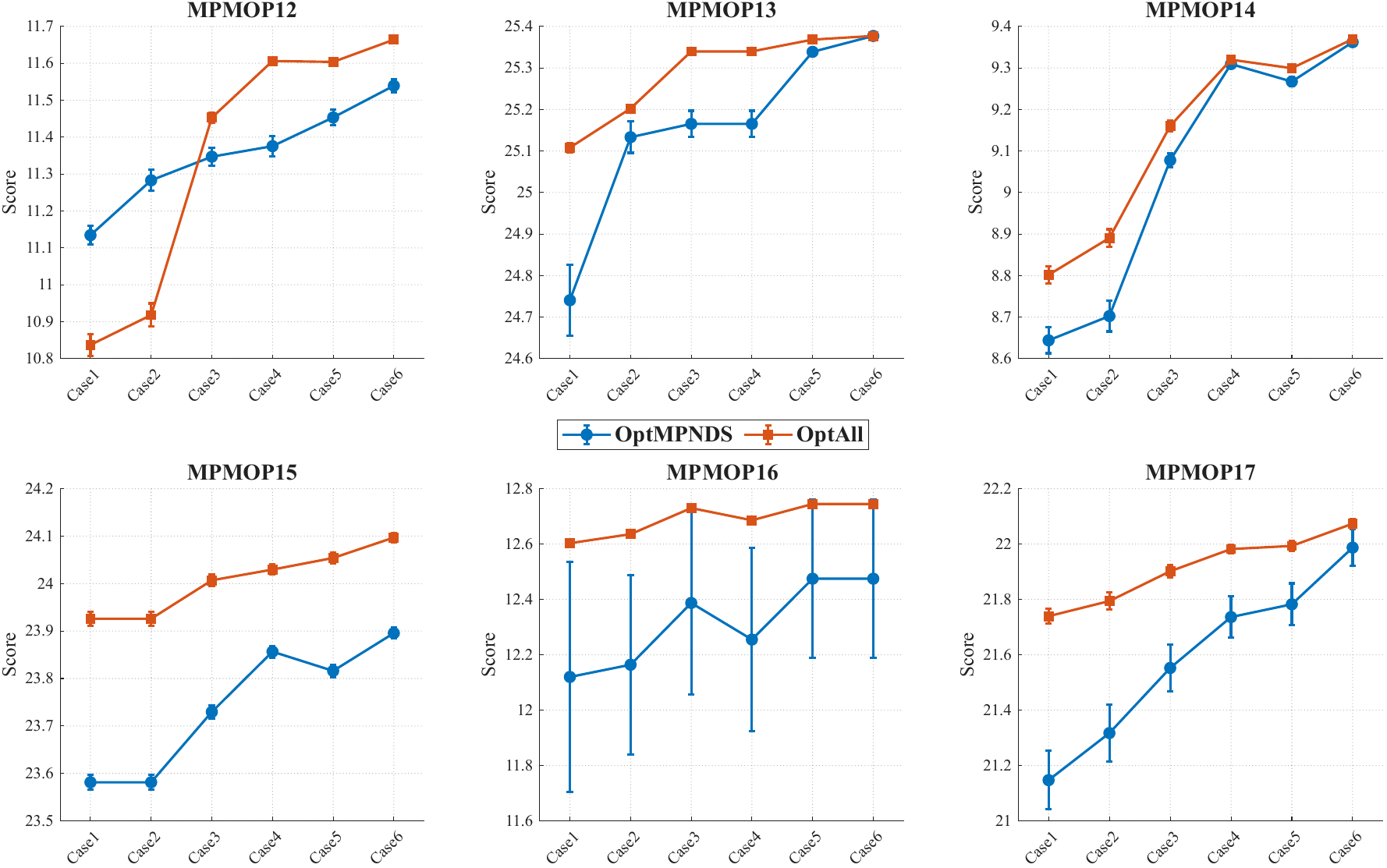}
	\caption{Evaluation scores $\Psi_{\mathrm{NP}}$ of OptMPNDS and OptAll under varying concession thresholds in no common solution scenarios of MPMOPs.}
	\label{fig:5.1.2}
\end{figure}

\begin{figure}[htbp]
	\centering
	\includegraphics[width=1\linewidth]{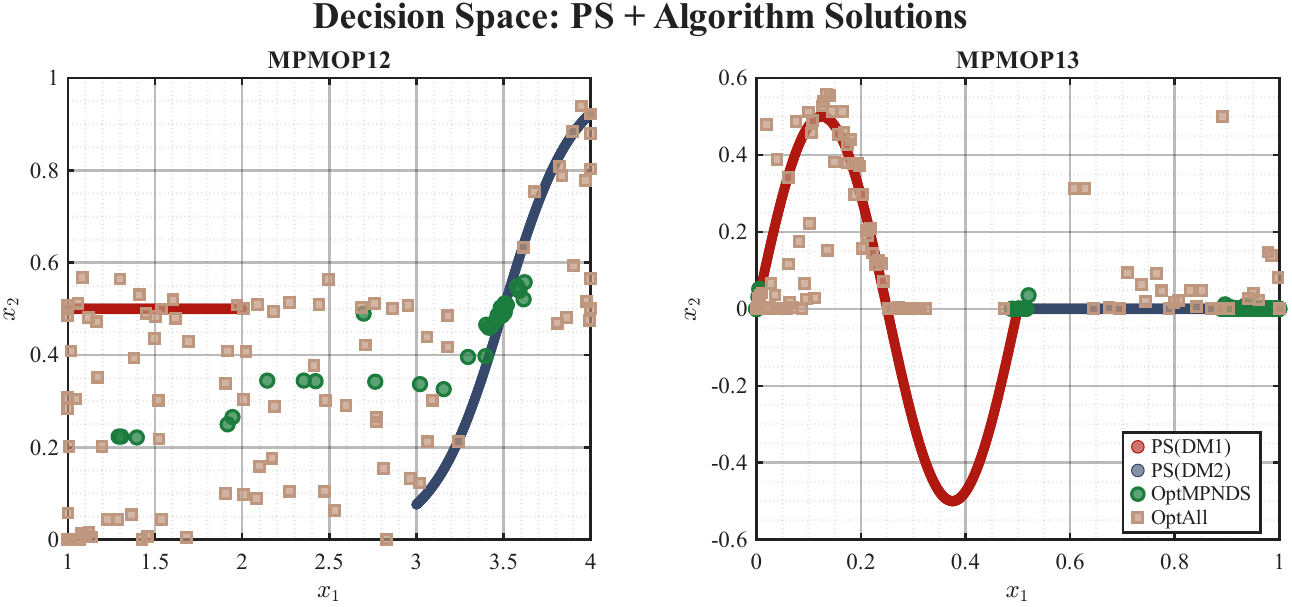}
	\caption{Representative solution distributions of OptMPNDS and OptAll on decision space of MPMOP12 and MPMOP13.}
	\label{fig:5.1.3}
\end{figure}

\subsubsection{Results on MPDMP Benchmarks}

Table~\ref{tab:MPDMP_results} summarizes the performance of different algorithms on the original MPDMP benchmarks using both meanIGD and $\Psi_{\mathrm{NP}}$. Since all algorithms are able to achieve relatively good convergence on these problems, the resulting meanIGD values are generally small. Consequently, the corresponding values of $\Psi_{\mathrm{NP}}$ become very large, and thus the logarithmic scale is adopted for numerical stability and clearer comparison.

From Table~\ref{tab:MPDMP_results}, it can be observed that, similar to the phenomena reported on the MPMOP benchmarks, meanIGD and $\Psi_{\mathrm{NP}}$ do not always yield consistent rankings across algorithms on MPDMPs. Specifically, the two metrics agree on the best-performing algorithm for MPDMP1, MPDMP5, MPDMP7, MPDMP8, and MPDMP11, whereas for the remaining problems the optimal results are achieved by different algorithms under the two metrics. This discrepancy again confirms that meanIGD and $\Psi_{\mathrm{NP}}$ emphasize fundamentally different aspects of solution quality: the former focuses on geometric proximity to individual PSs, while the latter explicitly evaluates mutual acceptability and fairness among DMs. Nevertheless, under both metrics, OptMPNDS3 achieves the highest $\Psi_{\mathrm{NP}}$ values across most MPDMP instances, whereas OptAll attains the lowest, indicating that both metrics implicitly favor algorithms that actively seek common solutions.

\begin{table*}[htbp]
	\centering
	\caption{Comparison of OptMPNDS variants and OptAll under $\log \Psi_{\mathrm{NP}}$ and meanIGD for MPDMP benchmarks.}
	\label{tab:MPDMP_results}
	\begin{tabular}{l l 
		>{\centering\arraybackslash}p{0.18\linewidth}
		>{\centering\arraybackslash}p{0.18\linewidth}
		>{\centering\arraybackslash}p{0.18\linewidth}
		>{\centering\arraybackslash}p{0.18\linewidth}}  
		\toprule
		Problem & Metric & OptAll & OptMPNDS & OptMPNDS2 & OptMPNDS3 \\
		\midrule
		\multirow{2}{*}{MPDMP1}  & $\log \Psi_{\mathrm{NP}}$ & $6.6979\ (0.1748)$ & $8.6562\ (0.0028)$ & $8.6556\ (0.0017)$ & $\mathbf{8.6565\ (0.0024)}$ \\
		& meanIGD & $4.04\text{E+}00\ (2.46\text{E+}00)$ & $5.45\text{E-}02\ (2.95\text{E-}02)$ & $4.76\text{E-}02\ (2.77\text{E-}02)$ & $\mathbf{4.47\text{E-}02\ (2.85\text{E-}02)}$ \\
		\midrule
		\multirow{2}{*}{MPDMP2}  & $\log \Psi_{\mathrm{NP}}$ & $8.3224\ (0.1470)$ & $9.2360\ (0.0051)$ & $\mathbf{9.2362\ (0.0044)}$ & $9.2354\ (0.0037)$ \\
		& meanIGD & $3.94\text{E+}00\ (2.06\text{E+}00)$ & $6.49\text{E-}02\ (3.44\text{E-}02)$ & $6.17\text{E-}02\ (3.25\text{E-}02)$ & $\mathbf{5.77\text{E-}02\ (3.57\text{E-}02)}$ \\
		\midrule
		\multirow{2}{*}{MPDMP3}  & $\log \Psi_{\mathrm{NP}}$ & $4.6996\ (0.1737)$ & $\mathbf{6.4921\ (0.0082)}$ & $6.4905\ (0.0076)$ & $6.4855\ (0.0087)$ \\
		& meanIGD & $2.73\text{E+}00\ (6.63\text{E-}01)$ & $2.11\text{E-}01\ (4.80\text{E-}02)$ & $2.15\text{E-}01\ (4.19\text{E-}02)$ & $\mathbf{1.13\text{E-}01\ (3.53\text{E-}02)}$ \\
		\midrule
		\multirow{2}{*}{MPDMP4}  & $\log \Psi_{\mathrm{NP}}$ & $9.3076\ (0.1440)$ & $10.1946\ (0.0028)$ & $\mathbf{10.1949\ (0.0033)}$ & $10.1826\ (0.0039)$ \\
		& meanIGD & $1.78\text{E-}02\ (1.38\text{E-}02)$ & $\mathbf{9.58\text{E-}04\ (2.11\text{E-}03)}$ & $1.05\text{E-}03\ (1.66\text{E-}03)$ & $1.33\text{E-}02\ (6.03\text{E-}03)$ \\
		\midrule
		\multirow{2}{*}{MPDMP5}  & $\log \Psi_{\mathrm{NP}}$ & $9.5794\ (0.1495)$ & $10.4618\ (0.0048)$ & $10.4620\ (0.0041)$ & $\mathbf{10.4666\ (0.0027)}$ \\
		& meanIGD & $6.90\text{E+}00\ (6.07\text{E-}01)$ & $2.97\text{E+}00\ (9.16\text{E-}02)$ & $2.94\text{E+}00\ (1.23\text{E-}01)$ & $\mathbf{2.66\text{E+}00\ (9.88\text{E-}02)}$ \\
		\midrule
		\multirow{2}{*}{MPDMP6}  & $\log \Psi_{\mathrm{NP}}$ & $8.9463\ (0.1413)$ & $\mathbf{9.9652\ (0.0013)}$ & $9.9645\ (0.0011)$ & $9.9630\ (0.0003)$ \\
		& meanIGD & $3.23\text{E+}00\ (7.96\text{E-}01)$ & $1.32\text{E-}01\ (8.91\text{E-}03)$ & $1.30\text{E-}01\ (9.17\text{E-}03)$ & $\mathbf{8.38\text{E-}02\ (1.37\text{E-}02)}$ \\
		\midrule
		\multirow{2}{*}{MPDMP7}  & $\log \Psi_{\mathrm{NP}}$ & $9.5011\ (0.1493)$ & $10.5529\ (0.0032)$ & $10.5530\ (0.0028)$ & $\mathbf{10.5549\ (0.0020)}$ \\
		& meanIGD & $8.93\text{E+}00\ (1.25\text{E+}00)$ & $2.45\text{E+}00\ (1.33\text{E-}01)$ & $2.45\text{E+}00\ (1.31\text{E-}01)$ & $\mathbf{2.26\text{E+}00\ (1.10\text{E-}01)}$ \\
		\midrule
		\multirow{2}{*}{MPDMP8}  & $\log \Psi_{\mathrm{NP}}$ & $9.7392\ (0.1291)$ & $11.0936\ (0.0043)$ & $11.0935\ (0.0046)$ & $\mathbf{11.0970\ (0.0026)}$ \\
		& meanIGD & $6.23\text{E+}00\ (8.02\text{E-}01)$ & $2.46\text{E+}00\ (6.52\text{E-}02)$ & $2.46\text{E+}00\ (8.63\text{E-}02)$ & $\mathbf{2.25\text{E+}00\ (5.57\text{E-}02)}$ \\
		\midrule
		\multirow{2}{*}{MPDMP9}  & $\log \Psi_{\mathrm{NP}}$ & $14.9919\ (0.1150)$ & $\mathbf{16.1469\ (0.0010)}$ & $16.1464\ (0.0022)$ & $16.1363\ (0.0040)$ \\
		& meanIGD & $6.71\text{E+}00\ (3.17\text{E+}00)$ & $3.10\text{E-}02\ (2.25\text{E-}02)$ & $\mathbf{2.15\text{E-}02\ (1.85\text{E-}02)}$ & $2.03\text{E-}01\ (8.75\text{E-}01)$ \\
		\midrule
		\multirow{2}{*}{MPDMP10} & $\log \Psi_{\mathrm{NP}}$ & $14.3334\ (0.1151)$ & $15.4952\ (0.0021)$ & $\mathbf{15.4957\ (0.0017)}$ & $15.4953\ (0.0017)$ \\
		& meanIGD & $6.55\text{E+}00\ (3.20\text{E+}00)$ & $9.64\text{E-}02\ (4.75\text{E-}02)$ & $8.45\text{E-}02\ (4.78\text{E-}02)$ & $\mathbf{6.99\text{E-}02\ (3.51\text{E-}02)}$ \\
		\midrule
		\multirow{2}{*}{MPDMP11} & $\log \Psi_{\mathrm{NP}}$ & $17.1521\ (0.1428)$ & $18.8487\ (0.0062)$ & $18.8495\ (0.0053)$ & $\mathbf{18.8566\ (0.0029)}$ \\
		& meanIGD & $1.79\text{E+}00\ (1.79\text{E+}00)$ & $1.39\text{E+}00\ (9.76\text{E-}02)$ & $1.38\text{E+}00\ (7.83\text{E-}02)$ & $\mathbf{1.19\text{E+}00\ (4.58\text{E-}02)}$ \\
		\midrule
		\multirow{2}{*}{MPDMP12} & $\log \Psi_{\mathrm{NP}}$ & $17.2519\ (0.1562)$ & $18.5885\ (0.0032)$ & $\mathbf{18.5891\ (0.0032)}$ & $18.5889\ (0.0023)$ \\
		& meanIGD & $1.02\text{E+}01\ (1.62\text{E+}00)$ & $\mathbf{2.74\text{E+}00\ (1.49\text{E-}01)}$ & $2.78\text{E+}00\ (1.42\text{E-}01)$ & $2.79\text{E+}00\ (3.17\text{E-}01)$ \\
		\bottomrule
	\end{tabular}
\end{table*}

\begin{figure}[htbp]
	\centering
	\includegraphics[width=1\linewidth]{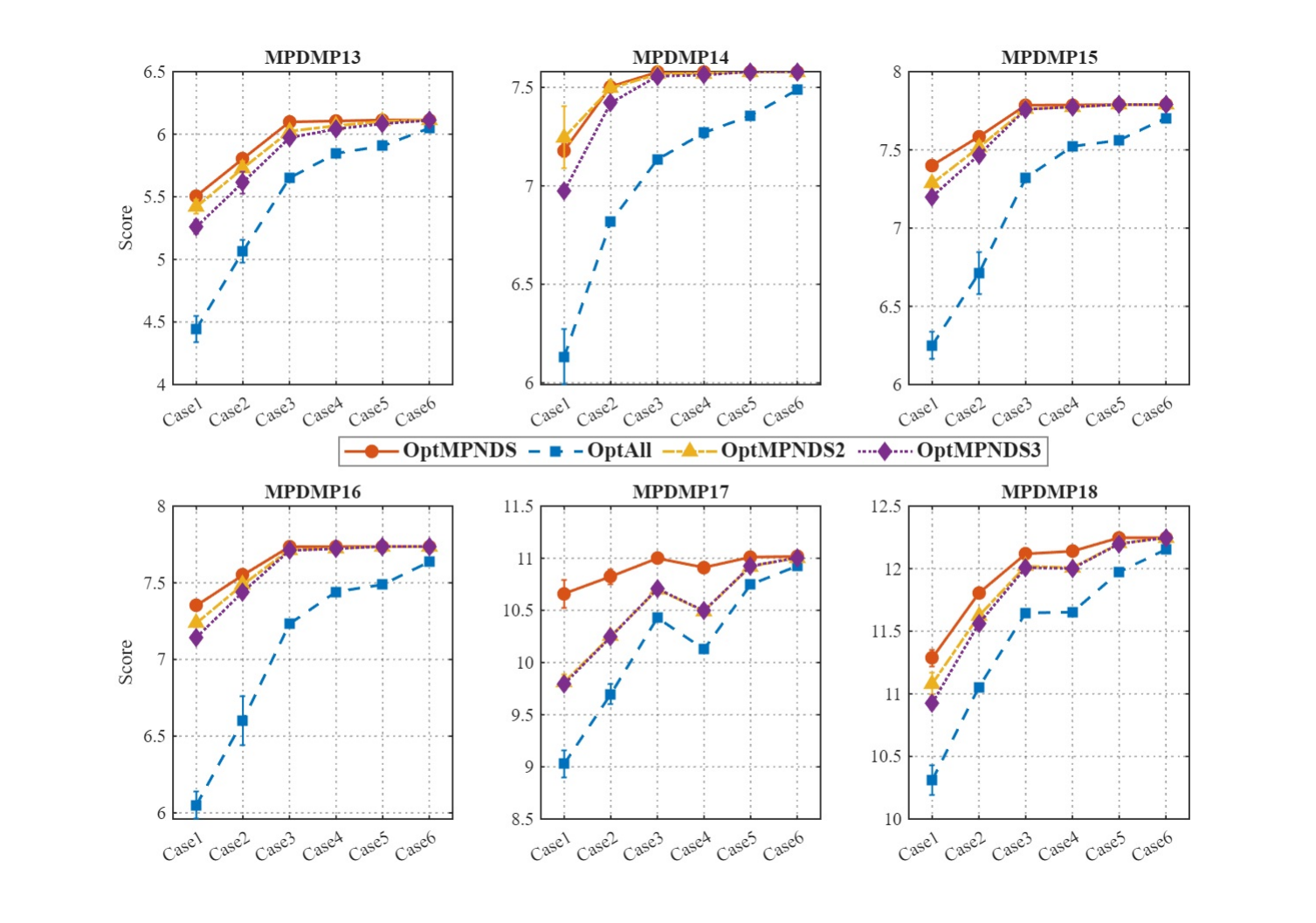}
	\caption{Evaluation scores $\log\Psi_{\mathrm{NP}}$ under varying concession thresholds in non common solution scenarios of MPDMPs.}
	\label{fig:5.2.1}
\end{figure}

\begin{figure}[htbp]
	\centering
	\includegraphics[width=0.8\linewidth]{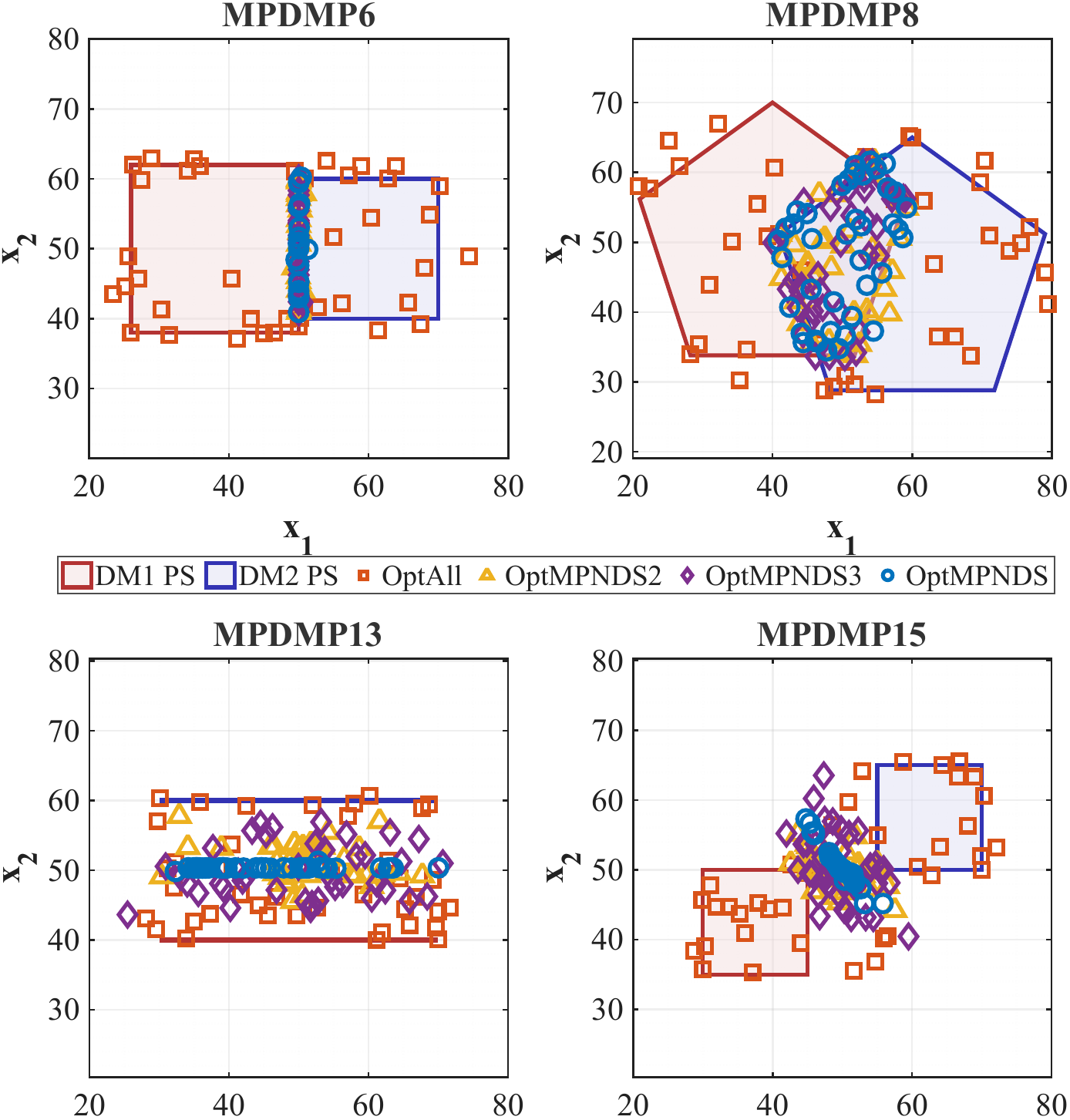}
	\caption{Representative solution distributions of algorithms on decision space of MPDMP6, 8, 13, and 15.}
	\label{fig:5.2.2}
\end{figure}

Fig.~\ref{fig:5.2.1} further shows the evaluation results on the constructed MPDMP instances with varying sizes of the common acceptable region $\mathcal{A}$. OptMPNDS achieves the highest evaluation scores across all cases, while OptAll remains the worst-performing algorithm. Moreover, for all algorithms, the evaluation score $\Psi_{\mathrm{NP}}$ increases monotonically as the acceptable region expands, reflecting the reduced difficulty in satisfying mutual acceptability. The performance gap among algorithms is most pronounced in Case~1, where the acceptable region is empty, whereas in Case~6 the scores of different algorithms become much closer, indicating diminishing fairness discrimination when concessions are sufficiently large.

More detailed insights can be obtained from Fig.~\ref{fig:5.2.2}, which visualizes representative decision-space distributions for problems with and without common acceptable solutions. For MPDMP6 and MPDMP8, where common acceptable regions exist, all three variants of OptMPNDS converge tightly toward the shared region, leading to very similar $\Psi_{\mathrm{NP}}$ values. In contrast, OptAll produces solutions that are more dispersed and significantly farther from the common region.

For MPDMP13 and MPDMP15, where no common solution exists, OptMPNDS still exhibits a distinctive behavior. Its solutions concentrate along a central compromise manifold between the Pareto sets of the two DMs, indicating a tendency toward balanced trade-offs. In contrast, the remaining algorithms generate solutions scattered around this region, with OptAll showing the widest dispersion. As a result, OptMPNDS attains the highest evaluation scores, while OptAll is subject to substantially stronger penalties under the concession-aware evaluation.

Overall, two observations can be made from these results. First, the proposed evaluation framework is capable of distinguishing algorithms not only by convergence quality but also by their ability to achieve fairness and mutual acceptability. Second, These results suggest that algorithm design for multi-party decision-making should not be limited to pursuing consensus solutions alone. The degree of concession between parties, which effectively acts as a controllable parameter in practical negotiations, must also be explicitly incorporated into both algorithmic mechanisms and performance evaluation.

\section{Discussion}
\label{sec:discussion}

\subsection{Sensitivity of $\Psi_{\mathrm{NP}}$ to Concession Levels}

This subsection examines how the proposed Nash-product-based evaluation function $\Psi_{\mathrm{NP}}$ responds to variations in concession, with particular attention to the interaction between the intrinsic concession level of a solution population and the concession threshold prescribed in the evaluation metric. The experiments aim to demonstrate whether the observed score landscapes align with the design principles of $\Psi_{\mathrm{NP}}$.

\begin{figure}
	\centering
	\includegraphics[width=1\linewidth]{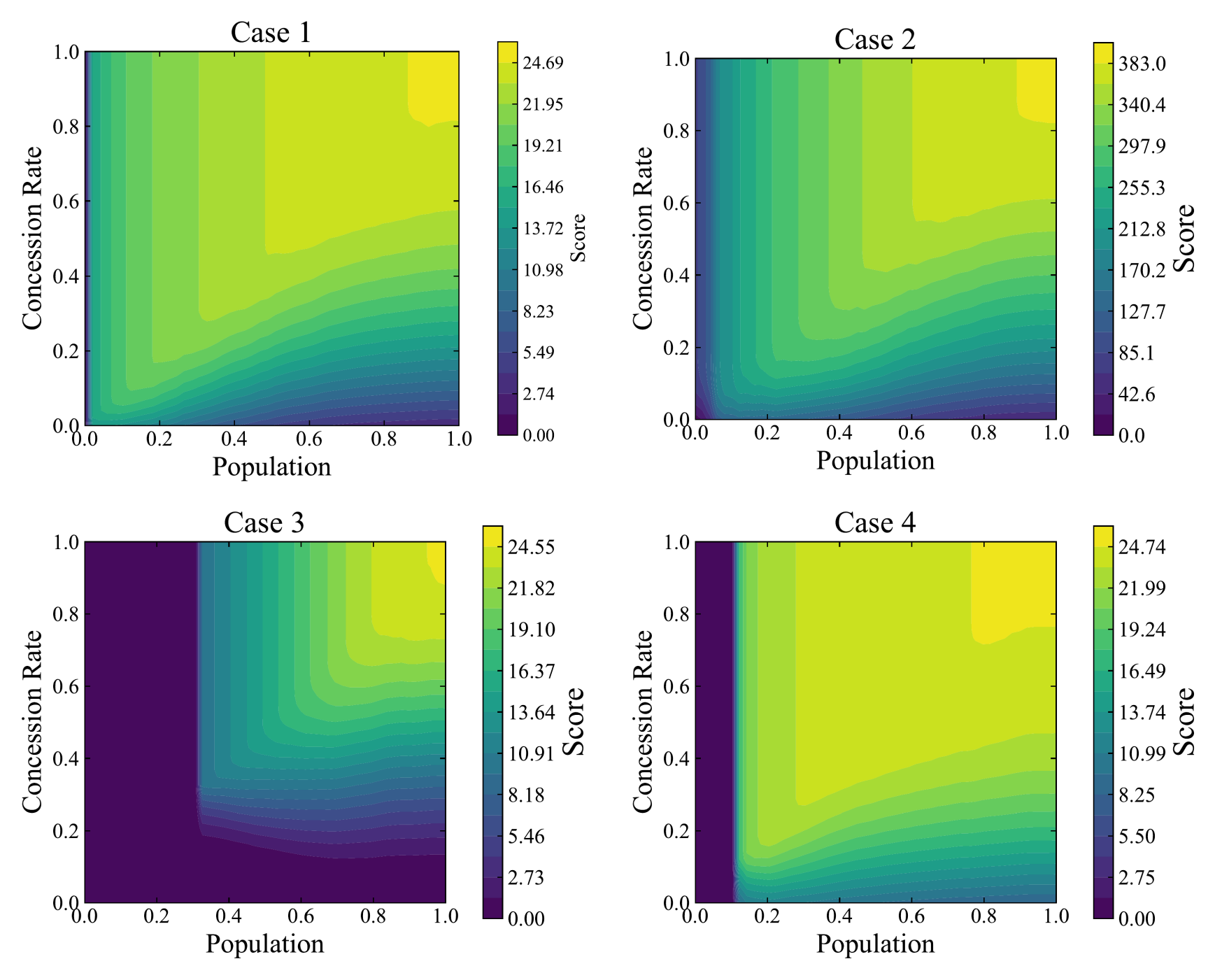}
	\caption{Contour plots of $\Psi_{\mathrm{NP}}$ under symmetric concession thresholds and population dispersion.}
	\label{fig:4.3.1}
\end{figure}

\emph{(1) Symmetric concession thresholds. }The first experiment considers four multi-party distance minimization problems. The first two problems admit inherent common solutions, whereas the latter two do not. For each problem, a solution population is generated within the acceptable region $\mathcal{A}$ with a prescribed intrinsic concession level $\gamma$, while the evaluation metric adopts an identical concession threshold $\hat{\gamma}$ for all DMs. By varying $(\gamma, \hat{\gamma})$, contour plots of $\Psi_{\mathrm{NP}}$ are obtained, as shown in Fig.~\ref{fig:4.3.1}.

The contour plots can be interpreted from two complementary perspectives. First, fixing a population generated in $\mathcal{A}$ with concession level $\gamma$, if the evaluation parameter satisfies $\hat{\gamma} \ge \gamma$, the resulting $\Psi_{\mathrm{NP}}$ value remains unchanged. In this case, the population fully satisfies the prescribed concession requirement and no penalty is incurred. In contrast, when $\hat{\gamma} < \gamma$, the evaluation score decreases monotonically as $\hat{\gamma}$ decreases, since part of the population is regarded as insufficiently acceptable and is therefore penalized.

Second, fixing the concession threshold $\hat{\gamma}$ in the evaluation metric and varying the intrinsic concession level $\gamma$ of the solution population, two trends are observed. When $\hat{\gamma}$ is relatively small, the evaluation score initially increases with $\gamma$ and then decreases. At small $\gamma$, the population is highly concentrated and its coverage of $\mathcal{A}$ is limited, resulting in a low score. As $\gamma$ increases and approaches $\hat{\gamma}$, the penalty vanishes and the score reaches its maximum. Further increasing $\gamma$ enlarges the population range, but since part of the population exceeds the prescribed threshold, penalties are reintroduced and the score decreases. In contrast, when $\hat{\gamma}$ is sufficiently large, the score increases monotonically with $\gamma$, because the population expansion improves coverage without triggering penalties.

For the two problems without inherent common solutions, the same qualitative trends are observed. However, an extended zero-score region appears at small $\gamma$, reflecting the absence of any common acceptable region $\mathcal{A}$. Only after sufficient concession enlarges $\mathcal{A}$ does $\Psi_{\mathrm{NP}}$ become positive. This behavior is consistent with the design of the metric, which assigns no value when mutual acceptability is not achieved.

\begin{figure}
	\centering
	\includegraphics[width=0.85\linewidth]{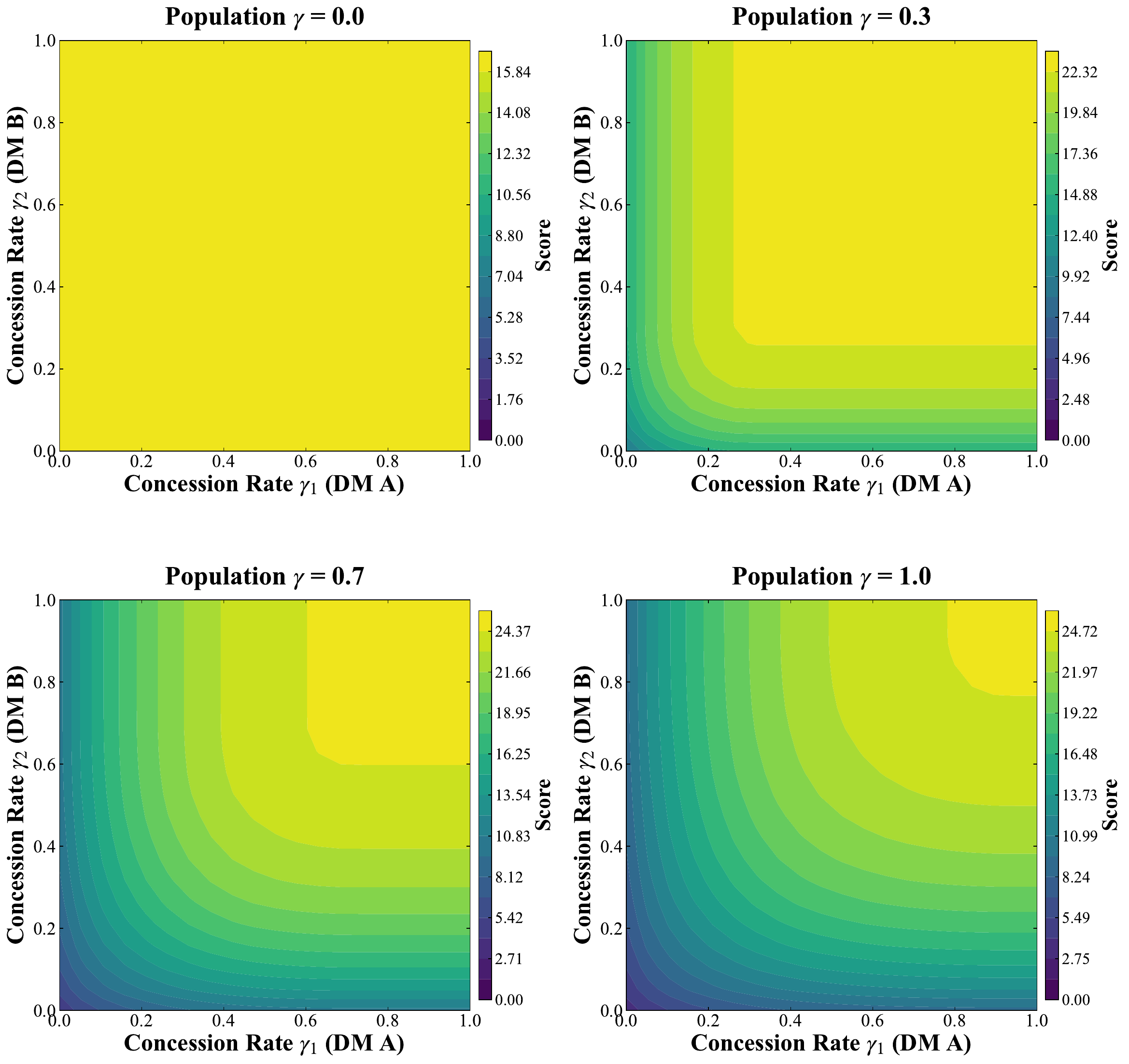}
	\caption{Contour plots of $\Psi_{\mathrm{NP}}$ under asymmetric concession thresholds.}
	\label{fig:4.3.2}
\end{figure}

\emph{(2) Asymmetric concession thresholds. }The second experiment investigates asymmetric concession thresholds in a multi-party distance minimization problem with inherent common solutions. Solution populations are generated under four intrinsic concession levels ($\gamma=0, 0.3, 0.7$, and $1.0$), while DMs adopt different concession thresholds. The resulting contour plots are shown in Fig.~\ref{fig:4.3.2}.

For a fixed solution population, relaxing the concession threshold of one DM leads to a monotonic increase in $\Psi_{\mathrm{NP}}$, indicating reduced penalties. Meanwhile, populations with larger intrinsic concession levels require correspondingly larger thresholds to avoid score degradation. The resulting score contours form a convex frontier, reflecting the trade-off induced by asymmetric concession requirements among DMs.

Overall, the two experiments illustrate that $\Psi_{\mathrm{NP}}$ produces structured and interpretable score landscapes in response to both population-level concession and evaluation-level concession thresholds. The observed trends are consistent across problems with and without inherent common solutions, and align with the intended penalty and acceptability mechanisms embedded in the metric. While these experiments do not constitute a definitive validation, they provide empirical evidence that the proposed evaluation function behaves in a manner consistent with its conceptual design.

\subsection{PF-Free Evaluation: A UAV Path Planning Case Study}

One practical concern of the proposed Nash-product-based evaluation framework is its applicability when true PFs are unavailable. In such scenarios, conventional PF-dependent penalties cannot be defined, and the evaluation necessarily focuses on fairness and relative balance rather than absolute . To examine whether the proposed framework remains usable under this constraint, we consider the multi-party UAV path planning problems~\cite{chen2025novel}, where explicit PFs cannot be obtained.

Table~\ref{tab:UAV_path_planning_results_summary} reports the evaluation results using three PF-free metrics: the sum of hypervolumes across DMs (sumHV), the product of absolute hypervolume values (Absolute), and a comparative Nash-product-based metric derived from normalized utilities (Comparative). This comparison is not intended to claim the superiority of any particular metric, but rather to illustrate how the proposed Nash-based formulation can still be instantiated in PF-free settings.

Let $G(P)$ denote a common strictly monotone gain-type performance metric used to evaluate solution sets when PFs are unknown. For each DM $m$, a comparative utility is defined as
\[
u_m(P)=\frac{G_m(P)}{\max_{P'\in\mathcal{P}} G_m(P')} \in (0,1],
\]
where $\mathcal{P}$ denotes the collection of solution sets generated by all compared algorithms. This normalization preserves ordinal preferences within each DM while mapping heterogeneous performance scales to a common unit interval. Based on these comparative utilities, a PF-free Nash-type score is defined as
\[
\Psi_{\mathrm{NP}} = \prod_{m=1}^{M} u_m(P).
\]
which evaluates the relative balance of performance across DMs without relying on explicit PF information or concession-based penalties.

Overall, the numerical differences among these metrics are relatively small across all MPUAV instances. More importantly, the resulting algorithm rankings are largely consistent: algorithms that perform well under sumHV also tend to obtain high scores under the Absolute and Comparative metrics. The numerical differences are small, and algorithm rankings are largely consistent across metrics. From an empirical perspective, these results suggest that the proposed PF-free Nash-type metric behaves similarly to the commonly used meanHV or sumHV criteria on this class of problems. In this sense, the comparative Nash formulation does not introduce unexpected or contradictory evaluation, and its evaluation outcomes remain reasonable and interpretable when PF information is unavailable.

Fig.~\ref{fig:5.3.1} provides a qualitative illustration of the solution distributions on MPUAV5. In this problem, MPAIMA achieves a wider spread of solutions in the objective space, which corresponds to a larger sumHV value. By contrast, OptMPNDS2 produces solutions that are more evenly balanced between the two DMs. While this difference does not lead to a pronounced discrepancy in the PF-free quantitative metrics reported in Table~\ref{tab:UAV_path_planning_results_summary}, it highlights a qualitative distinction between efficiency-oriented coverage and fairness-oriented compromise in solution distributions.

\begin{figure}[htbp]
	\centering
	\includegraphics[width=1\linewidth]{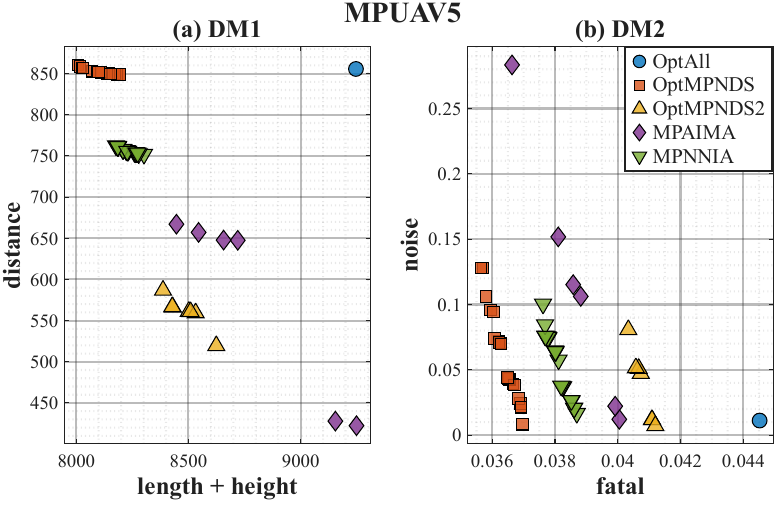}
	\caption{Solution distributions of comparative algorithms in the objective spaces of MPUAV5.}
	\label{fig:5.3.1}
\end{figure}

Overall, the UAV path planning example demonstrates that, in the absence of explicit PFs, the proposed Nash-product-based framework can still be instantiated in a PF-free manner and yields evaluation results generally aligned with conventional hypervolume-based metrics. This subsection demonstrates that the proposed Nash-product-based framework remains applicable and produces reasonable rankings in PF-free settings. How to systematically incorporate fairness penalties or concession mechanisms when PFs are unknown is an important open issue and is left for future investigation.

\begin{table*}[htbp]
	\centering
	\caption{Comparison of multi-party multi-objective optimization algorithms under $\Psi_{\mathrm{NP}}$ and sumHV for MPUAV path planning benchmarks.}
	\label{tab:UAV_path_planning_results_summary}
	
	\resizebox{\textwidth}{!}{
	\small
	\setlength{\tabcolsep}{3.5pt}
	\begin{tabular}{lcccccc}
		\toprule
		Problem & metric & OptAll & OptMPNDS & OptMPNDS2 & MPNNIA & MPAIMA\\
		\midrule
		
		\multirow{3}{*}{MPUAV1} 
		& Comparative & $3.50\text{E-01}\,(1.10\text{E-01})$ & $5.70\text{E-01}\,(9.47\text{E-02})$ & $6.20\text{E-01}\,(1.07\text{E-01})$ & $6.11\text{E-01}\,(1.06\text{E-01})$ & $\textbf{7.41\text{E-01}}\,\textbf{(1.23\text{E-01})}$ \\
		& Absolute& $1.22\text{E+04}\,(3.84\text{E+03})$ & $1.98\text{E+04}\,(3.29\text{E+03})$ & $2.16\text{E+04}\,(3.73\text{E+03})$ & $2.13\text{E+04}\,(3.68\text{E+03})$ & $\textbf{2.58\text{E+04}}\, \textbf{(4.28\text{E+03})}$ \\
		& sumHV & $9.51\text{E+05}\,(2.16\text{E+05})$ & $1.23\text{E+06}\,(1.69\text{E+05})$ & $1.25\text{E+06}\,(1.80\text{E+05})$ & $1.23\text{E+06}\,(1.77\text{E+05})$ & $\textbf{1.42\text{E+06}}\,\textbf{(1.63\text{E+05})}$ \\
		\midrule
		
		\multirow{3}{*}{MPUAV2} 
		& Comparative & $3.51\text{E-01}\,(1.05\text{E-01})$ & $6.19\text{E-01}\,(1.09\text{E-01})$ & $6.44\text{E-01}\,(1.37\text{E-01})$ & $6.51\text{E-01}\,(1.51\text{E-01})$ & $\textbf{7.54\text{E-01}}\,\textbf{(1.27\text{E-01})}$ \\
		& Absolute & $2.65\text{E+04}\,(7.90\text{E+03})$ & $4.66\text{E+04}\,(8.25\text{E+03})$ & $4.86\text{E+04}\,(1.03\text{E+04})$ & $4.91\text{E+04}\,(1.14\text{E+04})$ & $\textbf{5.69\text{E+04}}\,\textbf{(9.58\text{E+03})}$ \\
		& sumHV & $1.96\text{E+06}\,(4.23\text{E+05})$ & $2.94\text{E+06}\,(3.43\text{E+05})$ & $2.89\text{E+06}\,(4.63\text{E+05})$ & $2.86\text{E+06}\,(5.27\text{E+05})$ & $\textbf{3.10\text{E+06}}\,\textbf{(3.61\text{E+05})}$ \\
		\midrule
		
		\multirow{3}{*}{MPUAV3} 
		& Comparative & $3.69\text{E-01}\,(1.26\text{E-01})$ & $6.25\text{E-01}\,(1.27\text{E-01})$ & $6.17\text{E-01}\,(1.31\text{E-01})$ & $6.84\text{E-01}\,(1.03\text{E-01})$ & $\textbf{7.51\text{E-01}}\,\textbf{(1.08\text{E-01})}$ \\
		& Absolute & $4.46\text{E+04}\,(1.52\text{E+04})$ & $7.56\text{E+04}\,(1.53\text{E+04})$ & $7.46\text{E+04}\,(1.59\text{E+04})$ & $8.27\text{E+04}\,(1.24\text{E+04})$ & $\textbf{9.08\text{E+04}}\,\textbf{(1.31\text{E+04})}$ \\
		& sumHV & $2.86\text{E+06}\,(7.40\text{E+05})$ & $4.04\text{E+06}\,(5.99\text{E+05})$ & $3.84\text{E+06}\,(6.75\text{E+05})$ & $4.10\text{E+06}\,(5.34\text{E+05})$ & $\textbf{4.35\text{E+06}}\,\textbf{(4.87\text{E+05})}$ \\
		\midrule
		
		\multirow{3}{*}{MPUAV4} 
		& Comparative & $3.98\text{E-01}\,(1.37\text{E-01})$ & $6.35\text{E-01}\,(1.00\text{E-01})$ & $6.00\text{E-01}\,(1.27\text{E-01})$ & $5.68\text{E-01}\,(1.26\text{E-01})$ & $\textbf{6.83\text{E-01}}\,\textbf{(1.19\text{E-01})}$ \\
		& Absolute& $1.50\text{E+04}\,(5.15\text{E+03})$ & $2.39\text{E+04}\,(3.77\text{E+03})$ & $2.26\text{E+04}\,(4.79\text{E+03})$ & $2.14\text{E+04}\,(4.73\text{E+03})$ & $\textbf{2.57\text{E+04}}\,\textbf{(4.49\text{E+03})}$ \\
		& sumHV & $1.06\text{E+06}\,(2.35\text{E+05})$ & $1.20\text{E+06}\,(1.57\text{E+05})$ & $1.12\text{E+06}\,(2.32\text{E+05})$ & $1.03\text{E+06}\,(2.42\text{E+05})$ & $\textbf{1.31\text{E+06}}\,\textbf{(1.98\text{E+05})}$ \\
		\midrule
		
		\multirow{3}{*}{MPUAV5} 
		& Comparative & $3.47\text{E-01}\,(1.16\text{E-01})$ & $6.23\text{E-01}\,(1.19\text{E-01})$ & $\textbf{6.53\text{E-01}}\,\textbf{(1.48\text{E-01})}$ & $5.84\text{E-01}\,(1.28\text{E-01})$ & $6.13\text{E-01}\,(1.27\text{E-01})$ \\
		& Absolute  & $2.75\text{E+04}\,(9.18\text{E+03})$ & $4.93\text{E+04}\,(9.40\text{E+03})$ & $\textbf{5.17\text{E+04}}\,\textbf{(1.17\text{E+04})}$ & $4.63\text{E+04}\,(1.02\text{E+04})$ & $4.86\text{E+04}\,(1.01\text{E+04})$ \\
		& sumHV & $1.91\text{E+06}\,(4.49\text{E+05})$ & $2.51\text{E+06}\,(4.64\text{E+05})$ & $2.55\text{E+06}\,(5.29\text{E+05})$ & $2.25\text{E+06}\,(4.92\text{E+05})$ & $\textbf{2.60\text{E+06}}\,\textbf{(3.38\text{E+05})}$ \\
		\midrule
		
		\multirow{3}{*}{MPUAV6} 
		& Comparative & $2.84\text{E-01}\,(1.04\text{E-01})$ & $6.01\text{E-01}\,(1.44\text{E-01})$ & $6.00\text{E-01}\,(1.35\text{E-01})$ & $5.76\text{E-01}\,(1.11\text{E-01})$ & $\textbf{6.10\text{E-01}}\,\textbf{(1.48\text{E-01})}$ \\
		& Absolute& $2.82\text{E+04}\,(1.04\text{E+04})$ & $5.96\text{E+04}\,(1.43\text{E+04})$ & $5.95\text{E+04}\,(1.34\text{E+04})$ & $5.72\text{E+04}\,(1.10\text{E+04})$ & $\textbf{6.06\text{E+04}}\,\textbf{(1.47\text{E+04})}$ \\
		& sumHV & $2.60\text{E+06}\,(7.37\text{E+05})$ & $3.68\text{E+06}\,(7.23\text{E+05})$ & $3.48\text{E+06}\,(6.70\text{E+05})$ & $3.31\text{E+06}\,(6.13\text{E+05})$ & $\textbf{3.83\text{E+06}}\,\textbf{(7.33\text{E+05})}$ \\
		
		\bottomrule
	\end{tabular}}
\end{table*}

\section{Conclusion}
\label{sec: Conclusion}

This paper proposes a Nash-product-based performance evaluation framework for multi-party multi-objective optimization, with explicit emphasis on fairness across heterogeneous decision makers. By transforming heterogeneous performance metrics into strictly monotone utility representations and aggregating them multiplicatively, the proposed framework provides an evaluation criterion that favors balanced compromises rather than mean-dominated outcomes. To address scenarios in which strictly common Pareto-optimal solutions do not exist, we introduce the notion of concession rates, which leads to a generalized definition of multi-party optimal solutions and a corresponding penalty mechanism that discourages solution sets deviating excessively from mutually acceptable regions. Experiments on both MPMOP and MPDMP benchmarks show that the proposed metric $\Psi_{\mathrm{NP}}$ often yields rankings inconsistent with meanIGD, confirming that the two metrics capture fundamentally different aspects of solution quality. In particular, $\Psi_{\mathrm{NP}}$ consistently reflects inter-party balance and mutual acceptability under both consensus and non-consensus settings, as well as under symmetric and asymmetric concession configurations. Additional experiments indicate that the framework remains well defined in PF-free scenarios and produces evaluation outcomes consistent with classical indicators, although explicit fairness enforcement in such settings remains an open challenge.

Future work will focus on the development of fairness-aware evaluation criteria for PF-free settings, where acceptability and fairness must be inferred without explicit Pareto references. In addition, we plan to investigate fairness-aware optimization algorithms that incorporate fairness considerations directly into the search process, enabling principled trade-offs between efficiency and fairness in multi-party optimization.

\bibliographystyle{IEEEtran}
\bibliography{ref}



\section*{Supplementary material (transcribed from pages 15-19 of the arXiv PDF: Appendix.pdf)}

# Supplementary document of Fairness-Aware Performance Evaluation for Multi-Party Multi-Objective Optimization

## CONTENTS

## S-I. EXPERIMENTAL PARAMETER SETTINGS

### A. Setting of the Constant $C$

To determine an appropriate value of the constant $C$, a preliminary empirical analysis is conducted using NSGA-II. Specifically, NSGA-II is independently executed for $R = 10$ runs. For each decision maker (DM) $m$, the maximum loss observed across all runs is recorded as

$$\widehat{L}_m = \max_{r=1,\ldots,R} \max_{\mathbf{x} \in \mathcal{P}_{\text{NSGA-II}}^{(r)}} L_m(\mathbf{x}),$$

where $\mathcal{P}_{\text{NSGA-II}}^{(r)}$ denotes the final population obtained in the $r$-th run of NSGA-II.

The constant $C$ is then defined as

$$C = \max_{m=1,\ldots,M} \widehat{L}_m + \xi,$$

where $\xi > 0$ is a small safety margin introduced to account for potential unseen loss values.

This setting ensures that $C$ serves as an empirical upper bound for all loss values encountered during the search process. NSGA-II is employed in this preliminary step due to its robustness in approximating diverse Pareto-optimal solutions, making it suitable for estimating conservative loss bounds.

### B. Setting of the Penalty Factor $\lambda_m$

For each DM $m$, the Pareto-optimal solutions of all other DMs are projected onto the objective subspace of DM $m$, forming the set $\mathcal{P}_{-m}$. Let $PF_m$ denote the Pareto front of DM $m$.

We first define

$$\delta_m = \max_{v \in \mathcal{P}_{-m}} \text{IGD}(v, PF_m),$$

which represents the worst-case approximation error induced by solutions optimized for other DMs when evaluated against $PF_m$.

The penalty factor $\lambda_m$ is then determined as

$$\lambda_m = \max_{v \in \mathcal{P}_{-m}} \frac{\delta_m - \text{IGD}(v, PF_m)}{\gamma_m(v) - \hat{\gamma}},$$

where $\gamma_m(v)$ denotes the concession rate of solution $v$ with respect to DM $m$, and $\hat{\gamma}$ is the concession-rate threshold specified in the evaluation function.

### C. Setting of the Concession Rate $\hat{\gamma}$

To evaluate algorithmic behavior under varying degrees of compromise among DMs, multiple experimental scenarios are further defined by specifying different concession-rate thresholds. The same set of threshold configurations is consistently applied to all benchmark categories, enabling a unified and fair comparison across problem types. The considered concession-rate settings are reported in Table S-I.

TABLE S-I: Concession Rate Threshold Configurations

| Case | Description | 2 DMs | | 3 DMs | | |
|---|---|---|---|---|---|---|
| | | $\gamma_1$ | $\gamma_2$ | $\gamma_1$ | $\gamma_2$ | $\gamma_3$ |
| 1 | No concession | 0.0 | 0.0 | 0.0 | 0.0 | 0.0 |
| 2 | Partial concession | 0.3 | 0.0 | 0.3 | 0.0 | 0.0 |
| 3 | Equal partial | 0.3 | 0.3 | 0.3 | 0.3 | 0.3 |
| 4 | Asymmetric | 0.3 | 0.7 | 0.3 | 0.1 | 0.6 |
| 5 | Equal moderate | 0.5 | 0.5 | 0.5 | 0.5 | 0.5 |
| 6 | Equal high | 0.7 | 0.7 | 0.7 | 0.7 | 0.7 |

## S-II. MPMOP12–MPMOP17 BENCHMARK PROBLEMS

This section provides detailed descriptions of the MPMOP12–MPMOP17 benchmark problems. These problems are specifically constructed such that the Pareto sets (PSs) of different DMs are mutually disjoint, i.e., no solution is Pareto-optimal for all DMs simultaneously. This property distinguishes MPMOP12–MPMOP17 from MPMOP1–MPMOP11, where common Pareto-optimal solutions across DMs may exist.

The detailed configurations of the benchmark problems are summarized in Table S-III.

## S-III. MPDMP13–MPDMP18 BENCHMARK PROBLEMS

The MPDMP13–MPDMP18 benchmark suite extends the MPDMP framework by incorporating more complex geometric configurations with multiple DMs and heterogeneous

numbers of objectives. These problems are constructed based on diverse Pareto set (PS) geometries, including line segments, triangles, squares, and pentagons, which induce varied trade-off structures among DMs. As a result, the suite provides controlled yet nontrivial test scenarios for evaluating algorithm performance in multi-DM optimization settings.

For all problems, the decision vector is defined over a two-dimensional continuous domain, i.e., $\mathbf{x} \in [0, 100]^2$. For each DM $m \in \{1, \ldots, M\}$, let

$$F_m(\mathbf{x}) = \big(f_{m,1}(\mathbf{x}), \ldots, f_{m,K_m}(\mathbf{x})\big)$$

denote its objective vector. Each objective function is defined as the Euclidean distance from the decision vector $\mathbf{x}$ to a reference point associated with the underlying geometric structure:

$$f_{m,i}(\mathbf{x}) = \sqrt{(x_1 - p_{m,i,1})^2 + (x_2 - p_{m,i,2})^2}, \quad i = 1, \ldots, K_m, \tag{1}$$

where $\mathbf{p}_{m,i} = [p_{m,i,1}, p_{m,i,2}]^\mathsf{T}$ denotes the $i$-th reference point defining the PS geometry for DM $m$, and $K_m$ is the number of objectives associated with DM $m$. The reference points for all benchmark problems are summarized in Table S-II.

TABLE S-II: Reference points for MPDMP13–MPDMP18

| Problem | DM | Objective number | PS | $p_{i,1}$ | $p_{i,2}$ |
|---|---|---|---|---|---|
| MPDMP13 | 1 | 2 | Line segment | 30<br>70 | 40<br>40 |
|  | 2 | 2 | Line segment | 30<br>70 | 60<br>60 |
| MPDMP14 | 1 | 3 | Triangle | 30<br>50<br>40 | 35<br>35<br>50 |
|  | 2 | 3 | Triangle | 50<br>70<br>60 | 50<br>50<br>65 |
| MPDMP15 | 1 | 4 | Square | 30<br>45<br>45<br>30 | 35<br>35<br>50<br>50 |
|  | 2 | 4 | Square | 55<br>70<br>70<br>55 | 50<br>50<br>65<br>65 |
| MPDMP16 | 1 | 5 | Pentagon | 30<br>40<br>50<br>50<br>35 | 40<br>30<br>35<br>45<br>50 |
|  | 2 | 5 | Pentagon | 50<br>65<br>70<br>60<br>50 | 55<br>50<br>60<br>70<br>65 |
| MPDMP17 | 1 | 2 | Line segment | 25<br>40 | 60<br>70 |
|  | 2 | 2 | Line segment | 60<br>75 | 70<br>60 |
|  | 3 | 2 | Line segment | 40<br>60 | 30<br>30 |
| MPDMP18 | 1 | 3 | Triangle | 25<br>35<br>35 | 45<br>35<br>55 |
|  | 2 | 3 | Triangle | 45<br>55<br>50 | 25<br>25<br>40 |
|  | 3 | 3 | Triangle | 65<br>75<br>75 | 45<br>35<br>55 |

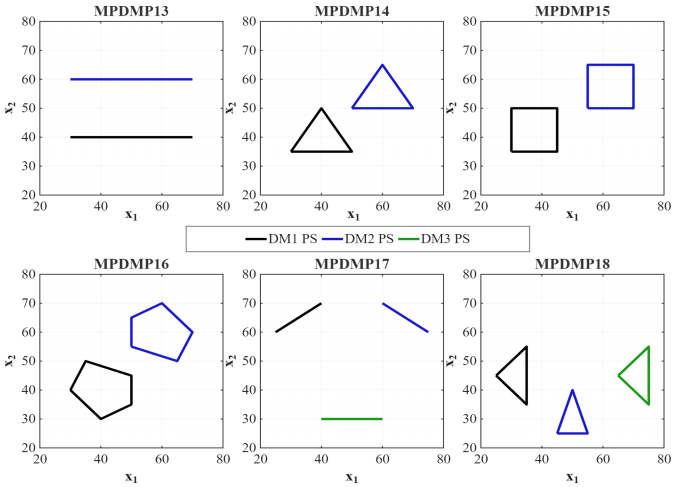

Fig. 1: MPDMPs with no common Pareto optimal solutions.

TABLE S-III: Mathematical Definitions of Benchmark Problems MPMOP12–MPMOP17

| Problem | DM | Objective Functions | Variable Bounds | Pareto Set (PS) |
|---|---|---|---|---|
| MPMOP12 | DM$_1$ | $F_1 = (f_{11}(\mathbf{x}), f_{12}(\mathbf{x}))$, $g_1(\mathbf{x}) = 1 + \sum_{i=2}^{D}(x_i - 0.5)^2$, $f_{11}(\mathbf{x}) = g_1(\mathbf{x}) \cdot \frac{2}{x_1}$, $f_{12}(\mathbf{x}) = g_1(\mathbf{x}) \cdot \frac{x_1}{2}$ | $x_1 \in [1, 2]$, $x_{i \geq 2} \in [0, 1]$ | $\{\mathbf{x} \mid x_1 \in [1, 2], x_i = 0.5\ (i \geq 2)\}$ |
|  | DM$_2$ | $F_2 = (f_{21}(\mathbf{x}), f_{22}(\mathbf{x}))$, $g_2(\mathbf{x}) = 1 + \sum_{i=2}^{D}\left(x_i - \frac{1}{1+e^{-5(x_1-3.5)}}\right)^2$, $f_{21}(\mathbf{x}) = g_2(\mathbf{x}) \cdot \frac{3}{x_1}$, $f_{22}(\mathbf{x}) = g_2(\mathbf{x}) \cdot \frac{x_1}{3}$ | $x_1 \in [3, 4]$, $x_{i \geq 2} \in [0, 1]$ | $\{\mathbf{x} \mid x_1 \in [3, 4], x_i = \frac{1}{1+e^{-5(x_1-3.5)}}\ (i \geq 2)\}$ |
| MPMOP13 | DM$_1$ | $F_1 = (f_{11}(\mathbf{x}), f_{12}(\mathbf{x}))$, $g_1(\mathbf{x}) = 1 + \sum_{i=2}^{D}\left(x_i - \frac{\sin(4\pi x_1)}{2}\right)^2 + 100\max(0, x_1 - 0.5)^2$, $f_{11}(\mathbf{x}) = g_1(\mathbf{x})(x_1 + 0.1\sin(3\pi x_1))$, $f_{12}(\mathbf{x}) = g_1(\mathbf{x})(1 - x_1 + 0.1\sin(3\pi x_1))^{4.25}$ | $x_i \in [0, 1]$ | $\{\mathbf{x} \mid x_1 \in [0, 0.5), x_i = \frac{\sin(4\pi x_1)}{2}\ (i \geq 2)\}$ |
|  | DM$_2$ | $F_2 = (f_{21}(\mathbf{x}), f_{22}(\mathbf{x}))$, $g_2(\mathbf{x}) = 1 + \sum_{i=2}^{D} x_i^2 + 100\max(0, 0.5 - x_1)^2$, $f_{21}(\mathbf{x}) = g_2(\mathbf{x})(x_1 + 0.1\sin(3\pi x_1))$, $f_{22}(\mathbf{x}) = g_2(\mathbf{x})(1 - x_1 + 0.1\sin(3\pi x_1))^{4.25}$ | $x_i \in [0, 1]$ | $\{\mathbf{x} \mid x_1 \in [0.5, 1], x_i = 0\ (i \geq 2)\}$ |
| MPMOP14 | DM$_1$ | $F_1 = (f_{11}(\mathbf{x}), f_{12}(\mathbf{x}), f_{13}(\mathbf{x}))$, $g_1(\mathbf{x}) = 1 + \sum_{i=3}^{D}\left(x_i - \frac{\sin(2\pi(x_1+x_2) + \pi/4)}{2}\right)^2$, $f_{11}(\mathbf{x}) = g_1(\mathbf{x})\sin^2\left(\frac{\pi}{2}x_1\right)$, $f_{12}(\mathbf{x}) = g_1(\mathbf{x})\sin^2\left(\frac{\pi}{2}x_2\right)\cos^2\left(\frac{\pi}{2}x_1\right)$, $f_{13}(\mathbf{x}) = g_1(\mathbf{x})\cos^2\left(\frac{\pi}{2}x_2\right)\cos^2\left(\frac{\pi}{2}x_1\right)$ | $x_i \in [0, 1]$ | $\{\mathbf{x} \mid x_1 \in [0, 0.5], x_2 \in [0, 1], x_i = \frac{\sin(2\pi(x_1+x_2)+\pi/4)}{2}\ (i \geq 3)\}$ |
|  | DM$_2$ | $F_2 = (f_{21}(\mathbf{x}), f_{22}(\mathbf{x}), f_{23}(\mathbf{x}))$, $g_2(\mathbf{x}) = 1 + \sum_{i=3}^{D}\left(x_i - \sin\left(2\pi(x_1+x_2) + \frac{\pi}{2}\right)\right)^2$, $f_{21}(\mathbf{x}) = g_2(\mathbf{x})\sin^2\left(\frac{\pi}{2}x_1\right)$, $f_{22}(\mathbf{x}) = g_2(\mathbf{x})\sin^2\left(\frac{\pi}{2}x_2\right)\cos^2\left(\frac{\pi}{2}x_1\right)$, $f_{23}(\mathbf{x}) = g_2(\mathbf{x})\cos^2\left(\frac{\pi}{2}x_2\right)\cos^2\left(\frac{\pi}{2}x_1\right)$ | $x_i \in [0, 1]$ | $\{\mathbf{x} \mid x_1 \in (0.5, 1], x_2 \in [0, 1], x_i = \sin\left(2\pi(x_1+x_2) + \frac{\pi}{2}\right)\ (i \geq 3)\}$ |

TABLE S-III: Mathematical Definitions of Benchmark Problems MPMOP12–MPMOP17 (Continued)

| Prob. | DM | Objective Functions | Variable Bounds | Pareto Set (PS) |
|---|---|---|---|---|
| MPMOP15 | $DM_1$ | $F_1 = (f_{11}(\mathbf{x}), f_{12}(\mathbf{x}))$, <br> $g_1(\mathbf{x}) = 1 + \sum_{i=2}^{D}(x_i - 0.5)^2$, <br> $f_{11}(\mathbf{x}) = g_1(\mathbf{x})\dfrac{2}{x_1}, \quad f_{12}(\mathbf{x}) = g_1(\mathbf{x})\dfrac{x_1}{2}$ | $x_1 \in [1, 2]$, <br> $x_{i \geq 2} \in [0, 1]$ | $\{\mathbf{x} \mid x_1 \in [1, 2],\ x_i = 0.5\ (i \geq 2)\}$ |
| | $DM_2$ | $F_2 = (f_{21}(\mathbf{x}), f_{22}(\mathbf{x}))$, <br> $g_2(\mathbf{x}) = 1 + \sum_{i=2}^{D}\left(x_i - \dfrac{1}{1+e^{-5(x_1-2.5)}}\right)^2$, <br> $f_{21}(\mathbf{x}) = g_2(\mathbf{x})\dfrac{3}{x_1}, \quad f_{22}(\mathbf{x}) = g_2(\mathbf{x})\dfrac{x_1}{3}$ | $x_1 \in [2, 3]$, <br> $x_{i \geq 2} \in [0, 1]$ | $\{\mathbf{x} \mid x_1 \in [2, 3],\ x_i = \dfrac{1}{1+e^{-5(x_1-2.5)}}\ (i \geq 2)\}$ |
| | $DM_3$ | $F_3 = (f_{31}(\mathbf{x}), f_{32}(\mathbf{x}))$, <br> $g_3(\mathbf{x}) = 1 + \sum_{i=2}^{D}(x_i - 0.5)^2$, <br> $f_{31}(\mathbf{x}) = g_3(\mathbf{x})\dfrac{4}{x_1}, \quad f_{32}(\mathbf{x}) = g_3(\mathbf{x})\dfrac{x_1}{4}$ | $x_1 \in [3, 4]$, <br> $x_{i \geq 2} \in [0, 1]$ | $\{\mathbf{x} \mid x_1 \in [3, 4],\ x_i = 0.5\ (i \geq 2)\}$ |
| MPMOP16 | $DM_1$ | $F_1 = (f_{11}(\mathbf{x}), f_{12}(\mathbf{x}))$, <br> $g_1(\mathbf{x}) = 1 + \sum_{i=2}^{D}\left(x_i - \dfrac{\sin(4\pi x_1)}{2}\right)^2 + 100\max(0, x_1 - 1/3)^2$, <br> $f_{11}(\mathbf{x}) = g_1(\mathbf{x})(x_1 + 0.1\sin(3\pi x_1))$, <br> $f_{12}(\mathbf{x}) = g_1(\mathbf{x})(1 - x_1 + 0.1\sin(3\pi x_1))^{2.5}$ | $x_i \in [0, 1]$ | $\{\mathbf{x} \mid x_1 \in [0, 1/3),\ x_i = \dfrac{\sin(4\pi x_1)}{2}\ (i \geq 2)\}$ |
| | $DM_2$ | $F_2 = (f_{21}(\mathbf{x}), f_{22}(\mathbf{x}))$, <br> $g_2(\mathbf{x}) = 1 + \sum_{i=2}^{D} x_i^2 + 100\left(\max(0, 1/3 - x_1)^2 + \max(0, x_1 - 2/3)^2\right)$, <br> $f_{21}(\mathbf{x}) = g_2(\mathbf{x})(x_1 + 0.1\sin(3\pi x_1))$, <br> $f_{22}(\mathbf{x}) = g_2(\mathbf{x})(1 - x_1 + 0.1\sin(3\pi x_1))^{3.0}$ | $x_i \in [0, 1]$ | $\{\mathbf{x} \mid x_1 \in [1/3, 2/3],\ x_i = 0\ (i \geq 2)\}$ |
| | $DM_3$ | $F_3 = (f_{31}(\mathbf{x}), f_{32}(\mathbf{x}))$, <br> $g_3(\mathbf{x}) = 1 + \sum_{i=2}^{D}\left(x_i + \dfrac{\sin(4\pi x_1)}{2}\right)^2 + 100\max(0, 2/3 - x_1)^2$, <br> $f_{31}(\mathbf{x}) = g_3(\mathbf{x})(x_1 + 0.1\sin(3\pi x_1))$, <br> $f_{32}(\mathbf{x}) = g_3(\mathbf{x})(1 - x_1 + 0.1\sin(3\pi x_1))^{3.5}$ | $x_i \in [0, 1]$ | $\{\mathbf{x} \mid x_1 \in [2/3, 1],\ x_i = -\dfrac{\sin(4\pi x_1)}{2}\ (i \geq 2)\}$ |

TABLE S-III: Mathematical Definitions of Benchmark Problems MPMOP12–MPMOP17 (Continued)

| Prob. | DM | Objective Functions | Variable Bounds | Pareto Set (PS) |
|---|---|---|---|---|
| **MPMOP17** | DM$_1$ | $F_1 = (f_{11}(\mathbf{x}), f_{12}(\mathbf{x}), f_{13}(\mathbf{x}))$, <br> $g_1(\mathbf{x}) = 1 + \sum_{i=3}^{D}(x_i - 0.5x_1)^2$, <br> $y_1 = \frac{\pi}{6} + \frac{\pi}{3}x_1, \quad y_2 = \frac{\pi}{2}x_2$, <br> $f_{11}(\mathbf{x}) = g_1(\mathbf{x})\sin(y_1)$, <br> $f_{12}(\mathbf{x}) = g_1(\mathbf{x})\sin(y_2)\cos(y_1)$, <br> $f_{13}(\mathbf{x}) = g_1(\mathbf{x})\cos(y_2)\cos(y_1)$ | $x_i \in [0,1]$ | $\{\mathbf{x} \mid x_1 < 0.33, x_2 \in [0,1], x_i = 0.5x_1 \ (i \geq 3)\}$ |
| | DM$_2$ | $F_2 = (f_{21}(\mathbf{x}), f_{22}(\mathbf{x}), f_{23}(\mathbf{x}))$, <br> $g_2(\mathbf{x}) = 1 + \sum_{i=3}^{D} x_i^2$, <br> $y_1 = \frac{\pi}{2}x_1, \quad y_2 = \frac{\pi}{2}x_2$, <br> $f_{21}(\mathbf{x}) = g_2(\mathbf{x})\sin(y_1)$, <br> $f_{22}(\mathbf{x}) = g_2(\mathbf{x})\sin(y_2)\cos(y_1)$, <br> $f_{23}(\mathbf{x}) = g_2(\mathbf{x})\cos(y_2)\cos(y_1)$ | $x_i \in [0,1]$ | $\{\mathbf{x} \mid 0.33 \leq x_1 < 0.66, x_2 \in [0,1], x_i = 0 \ (i \geq 3)\}$ |
| | DM$_3$ | $F_3 = (f_{31}(\mathbf{x}), f_{32}(\mathbf{x}), f_{33}(\mathbf{x}))$, <br> $g_3(\mathbf{x}) = 1 + \sum_{i=3}^{D}(x_i - 0.5x_1)^2$, <br> $y_1 = \frac{\pi}{6} + \frac{\pi}{3}x_1, \quad y_2 = \frac{\pi}{2}x_2$, <br> $f_{31}(\mathbf{x}) = g_3(\mathbf{x})\sin(y_1)$, <br> $f_{32}(\mathbf{x}) = g_3(\mathbf{x})\sin(y_2)\cos(y_1)$, <br> $f_{33}(\mathbf{x}) = g_3(\mathbf{x})\cos(y_2)\cos(y_1)$ | $x_i \in [0,1]$ | $\{\mathbf{x} \mid x_1 \geq 0.66, x_2 \in [0,1], x_i = 0.5x_1 \ (i \geq 3)\}$ |



\end{document}